\def\eqref#1{equation~\ref{#1}}
\def\1{\bm{1}}
\def\rvw{{\mathbf{w}}}
\def\rvx{{\mathbf{x}}}
\def\rvz{{\mathbf{z}}}
\DeclareMathAlphabet{\mathsfit}{\encodingdefault}{\sfdefault}{m}{sl}
\SetMathAlphabet{\mathsfit}{bold}{\encodingdefault}{\sfdefault}{bx}{n}
\def\gL{{\mathcal{L}}}
\def\gU{{\mathcal{U}}}
\def\sD{{\mathbb{D}}}
\def\sH{{\mathbb{H}}}
\def\sI{{\mathbb{I}}}
\def\sM{{\mathbb{M}}}
\def\sR{{\mathbb{R}}}
\def\sX{{\mathbb{X}}}
\def\sZ{{\mathbb{Z}}}
\newcommand{\E}{\mathbb{E}}
\newcommand{\KL}{\text{KL}}
\newcommand{\x}{\rvx}
\newcommand{\z}{\rvz}
\newcommand{\w}{\rvw}
\renewcommand{\L}{\gL}
\newtheorem{proposition}{Proposition}
\newcommand{\citep}{\cite}
\newcommand{\citet}{\cite}
\title{Bayesian Variational Autoencoders for \\Unsupervised Out-of-Distribution Detection}
\author{%
  Erik Daxberger\\
  University of Cambridge\\
  Max Planck Institute for\\
  Intelligent Systems, T\"{u}bingen\\
  \texttt{ead54@cam.ac.uk}\\
  \And
  Jos\'{e} Miguel Hern\'{a}ndez-Lobato\\
  University of Cambridge\\
  Alan Turing Institute\\
  Microsoft Research\\
  \texttt{jmh233@cam.ac.uk}\\
}
\begin{document}

\maketitle

\begin{abstract}
    Despite their successes, deep neural networks may make unreliable predictions when faced with test data drawn from a distribution different to that of the training data, constituting a major problem for AI safety.
    While this has recently motivated the development of methods to detect such out-of-distribution (OoD) inputs, a robust solution is still lacking.
    We propose a new probabilistic, unsupervised approach to this problem based on a Bayesian variational autoencoder model, which estimates a full posterior distribution over the decoder parameters using stochastic gradient Markov chain Monte Carlo, instead of fitting a point estimate.
    We describe how information-theoretic measures based on this posterior can then be used to detect OoD inputs both in input space and in the model's latent space.
    We empirically demonstrate the effectiveness of our proposed approach.
\end{abstract}
\section{Introduction}
\label{sec:intro}
\textbf{Outlier detection in input space.}
While deep neural networks (DNNs) have successfully tackled complex real-world problems in various domains including vision, speech and language \citep{lecun2015}, they still face significant limitations that make them unfit for safety-critical applications \citep{amodei2016}.
One well-known shortcoming of DNNs is when faced with test data points coming from a different distribution than the data the network saw during training, the DNN will not only output incorrect predictions, but it will do so with high confidence \cite{nguyen2015}.
The lack of robustness of DNNs to such \emph{out-of-distribution} (OoD) inputs (or \emph{outliers/anomalies}) was recently addressed by various methods to detect OoD inputs in the context of prediction tasks (typically classification)
\citep{hendrycks2016,liang2017,hendrycks2018}.
When we are only given input data, one simple and seemingly sensible approach to detect a potential OoD input $\x^*$ is to train a likelihood-based deep generative model (DGM; e.g. a VAE, auto-regressive DGM, or flow-based DGM) by (approximately) maximizing the probability $p(\sD|\theta)$ of the training data $\sD$ under the model parameters $\theta$, and to then estimate the density $p(\x^*|\theta)$ of $\x^*$ under the generative model $\theta$ \citep{bishop1994}.
If $p(\x^*|\theta)$ is large, then $\x^*$ is likely in-distribution, and OoD otherwise.
However, recent works have shown that this likelihood-based approach does not work in general, as DGMs sometimes assign \emph{higher} density to OoD data than to in-distribution data \citep{nalisnick2018}.
While some papers developed more effective scores that correct the likelihood \citep{choi2018,ren2019,nalisnick2019}, we argue and show that OoD detection methods
fundamentally based on the unreliable likelihood estimates by DGMs are not robust.

\textbf{Outlier detection in latent space.}
In a distinct line of research, recent works have tackled the challenge of optimizing a costly-to-evaluate black-box function $f: \sX \rightarrow \sR, f(\x) = y$ over a high-dimensional, richly structured input domain $\sX$ (e.g. graphs, images).
Given data $\sD = \{(\x_i, y_i)\}_{i=1}^N$, these methods jointly train a VAE on inputs $\x$ and a predictive model $g: \sZ \rightarrow \sR, g(\z) = y$ mapping from latent codes $\z$ to targets $y$, to then perform the optimization w.r.t.\ $y$ in the low-dimensional, continuous latent space $\sZ$ instead of in input space $\sX$ \citep{gomez2018}.
While these methods have achieved strong results in domains including automatic chemical design and automatic machine learning 
\citep{gomez2018,luo2018,lu2018,tripp2020sample}, their practical effectiveness is limited by their ability to handle the following trade-off: They need to find inputs $\x$ that both have a high target value $y$ and are sufficiently novel (i.e., not too close to training inputs $\sD$), and at the same time ensure that the optimization w.r.t.\ $y$ does not progress into regions of the latent space $\sZ$ too far away from the training data, which might yield latent points $\z$ that decode to semantically meaningless or syntactically invalid inputs $\x$
\citep{kusner2017}.
The required ability to quantify the \emph{novelty} of latents $\z$ (i.e., the semantic/syntactic distance to $\sD$) directly corresponds to the ability to effectively detect \emph{outliers} in latent space $\sZ$.

\textbf{Our approach.}
We propose a novel unsupervised, probabilistic method to \emph{simultaneously} tackle the challenge of detecting outliers $\x^*$ in input space $\sX$
as well as outliers $\z^*$ in latent space $\sZ$.
To this end, we take an information-theoretic perspective on OoD detection, and propose to use the (expected) \emph{informativeness} of an input $\x^*$ / latent $\z^*$ as a proxy for whether $\x^*$ / $\z^*$ is OoD or not.
To quantify this informativeness, we leverage probabilistic inference methods to maintain a posterior distribution over the parameters of a DGM, in particular of a variational autoencoder (VAE) \citep{kingma2013,rezende2014}.
This results in a \emph{Bayesian VAE} (BVAE) model, where instead of fitting a point estimate of the decoder parameters via maximum likelihood, we estimate their posterior using samples generated via stochastic gradient Markov chain Monte Carlo (MCMC).
The informativeness of an unobserved
$\x^*$ /
$\z^*$ is then quantified by measuring the (expected) change in the posterior over model parameters after having observed $\x^*$ / $\z^*$, revealing an intriguing connection to information-theoretic \emph{active learning} \citep{mackay1992}. 

\textbf{Our contributions.}
(a) We explain how DGMs can be made more robust by capturing epistemic uncertainty via a posterior distribution over their parameters, and describe how such \emph{Bayesian DGMs} can effectively detect outliers both in input space and in the model's latent space based on information-theoretic principles (\cref{sec:ood}).
(b) We propose a \emph{Bayesian VAE} model as a concrete instantiation of a Bayesian DGM (\cref{sec:bvae}).
(c) We empirically demonstrate that our approach significantly outperforms previous OoD detection methods across commonly-used benchmarks (\cref{sec:experiments}).

\section{Problem Statement and Background}
\label{sec:background}
\subsection{Out-of-Distribution (OoD) Detection}
For \emph{input space} OoD detection, we are given a large set $\sD = \{\x_i\}_{i=1}^N$ of high-dimensional training inputs $\x_i \in \sX$ (i.e., with $N > 25,000$ and $\text{dim}(\sX) > 500$) drawn i.i.d.\ from a distribution $p^*(\x)$, and a \emph{single} test input $\x^*$, and need to determine if $\x^*$ was drawn from $p^*$ or from some other distribution $\tilde{p} \neq p^*$.
\emph{Latent space} OoD detection is analogous, but with an often smaller set of typically lower-dimensional latent points $\z_i \in \sZ$ (i.e., with $\text{dim}(\sZ) < 100$).

\subsection{Variational Autoencoders}
Consider a latent variable model $p(\x, \z | \theta)$ with marginal log-likelihood (or \emph{evidence}) $\log p(\x|\theta) = \log \int p(\x, \z | \theta) d\z$, where $\x$ are observed variables, $\z$ are latent variables, and $\theta$ are model parameters.
We assume that $p(\x, \z | \theta) = p(\x|\z, \theta) p(\z)$ factorizes into a prior distribution $p(\z)$ over $\z$ and a likelihood $p(\x|\z, \theta)$ of $\x$ given $\z$ and $\theta$.
As we assume the $\z$ to be continuous, $p(\x|\theta)$ is intractable to compute.
We obtain a \emph{variational autoencoder} (VAE) \citep{kingma2013,rezende2014} if $\theta$ are the parameters of a DNN (the \emph{decoder}), and the resulting intractable posterior $p(\z|\x, \theta)$ over $\z$ is approximated using amortized variational inference (VI) via another DNN $q(\z|\x, \phi)$ with parameters $\phi$ (the \emph{encoder} or \emph{inference/recognition network}).
Given training data $\sD$, the parameters $\theta$ and $\phi$ of a VAE are learned by maximizing the evidence lower bound (ELBO) $\sum_{\x \in \sD} \mathcal{L}_{\theta, \phi}(\x)$, where
\begin{flalign}
\label{eq:elbo_vae}
\hspace{-3mm}
    \L_{\theta, \phi}(\x) \hspace{-1mm}= \hspace{-1mm}\E_{q(\z|\x,\phi)}[\log p(\x | \z, \theta)] \hspace{-0.5mm}- \hspace{-0.5mm}\KL[ {q(\z|\x, \phi)} \Vert p(\z) ]\hspace{-2mm}
\end{flalign}
for $\x \in \sD$, with $\L_{\theta,\phi}(\x) \leq \log p(\x|\theta)$.
As maximizing the ELBO approximately maximizes the evidence $\log p(\sD|\theta)$, this can be viewed as approximate maximum likelihood estimation (MLE).
In practice, $\L_{\theta, \phi}(\x)$ in \cref{eq:elbo_vae} is optimized by mini-batch stochastic gradient-based methods using low-variance, unbiased, stochastic Monte Carlo estimators of $\nabla \mathcal{L}_{\theta, \phi}$ obtained via the reparametrization trick.
Finally, one can use \emph{importance sampling} w.r.t.\ the variational posterior $q(\z|\x, \phi)$ to get an estimator $\hat{p}(\x|\theta, \phi)$ of the probability $p(\x|\theta)$ of an input $\x$ under the generative model, i.e.,
\begin{equation}
\label{eq:is_vae}
    p(\x|\theta) \simeq \hat{p}(\x | \theta, \phi)
    = \textstyle\frac{1}{K} \textstyle\sum_{k=1}^K \frac{p(\x|\z_k, \theta) p(\z_k)}{q(\z_k |\x, \phi)}\ ,
\end{equation}
where $\z_k \sim q(\z|\x, \phi)$ and where the estimator $\hat{p}(\x | \theta, \phi)$ is conditioned on \emph{both} $\theta$ and $\phi$ to make explicit the dependence on the parameters $\phi$ of the proposal distribution $q(\z|\x, \phi)$.

\section{Information-theoretic Out-of-Distribution Detection}
\label{sec:ood}
\subsection{Motivation and Intuition}
\label{subsec:motivation}

\textbf{Why do deep generative models \emph{fail} at OoD detection?}
Consider the
approach which first trains a density estimator parameterized by $\theta$, and then classifies an input $\x^*$ as OoD based on a threshold on the density of $\x^*$, i.e., if $p(\x^*|\theta) < \tau$ \cite{bishop1994}.
Recent advances in deep generative modeling (DGM) allow us to do density estimation even over high-dimensional, structured input domains (e.g. images, text), which in principle enables us to use this method in such complex settings.
However, the resulting OoD detection performance fundamentally relies on the quality of the likelihood estimates produced by these DGMs.
In particular, a sensible density estimator should assign high density to everything within the training data distribution, and low density to everything outside -- a property of crucial importance for effective OoD detection.
Unfortunately, \citet{nalisnick2018,choi2018} found that modern DGMs are often poorly calibrated, assigning \emph{higher} density to OoD data than to in-distribution data.%
\footnote{It is a common misconception that DGMs are "immune" to OoD miscalibration as they capture a density. While this might hold for simple models such as KDEs \cite{parzen1962} on low-dimensional data, it does \emph{not} generally hold for complex, DNN-based models on high-dimensional data \cite{nalisnick2018}.
In particular, while DGMs are trained to assign high probability to the training data, OoD data is not necessarily assigned low probability.}
This questions the use of DGMs for reliable density estimation and thus robust OoD detection.

\textbf{How are deep \emph{discriminative} models made OoD robust?}
DNNs are typically trained by maximizing the likelihood $p(\sD|\theta)$ of a set of training examples $\sD$ under model parameters $\theta$, yielding the maximum likelihood estimate (MLE) $\theta^*$ \cite{goodfellow2016}.
For \emph{discriminative}, predictive models $p(y|\x,\theta)$, it is well known that the point estimate $\theta^*$ does not capture \emph{model / epistemic uncertainty}, i.e., uncertainty about the choice of model parameters $\theta$ induced by the fact that many different models $\theta$ might have generated $\sD$.
As a result, discriminative models $p(y|\x,\theta^*)$ trained via MLE tend to be overconfident in their predictions, especially on OoD data \cite{nguyen2015,guo2017}.
A principled, established way to capture model uncertainty in DNNs is to be Bayesian and infer a full \emph{distribution} $p(\theta|\sD)$ over parameters $\theta$, yielding the predictive distribution $p(y|\x,\sD) = \int p(y|\x,\theta) p(\theta|\sD) d\theta$ \cite{gal2016uncertainty}.
Bayesian DNNs have much better OoD robustness than deterministic ones (e.g., producing low uncertainty for in-distribution and high uncertainty for OoD data), and OoD calibration has become a major benchmark for Bayesian DNNs \cite{gal2016,ovadia2019,osawa2019,maddox2019}.
This suggests that capturing model uncertainty via Bayesian inference is a promising way to achieve robust, principled OoD detection.

\textbf{Why should we use \emph{Bayesian DGMs} for OoD detection?}
Just like deep discriminative models, DGMs are typically trained by maximizing the probability $p(\sD|\theta)$ that $\sD$ was generated by the density model, yielding the MLE $\theta^*$.
As a result, it is not surprising that the shortcomings of MLE-trained \emph{discriminative} models also translate to MLE-trained \emph{generative} models, such as the miscalibration and unreliability of their likelihood estimates $p(\x^*|\theta^*)$ for OoD inputs $\x^*$.
This is because there will always be many different plausible generative / density models $\theta$ of the training data $\sD$, which are not captured by the point estimate $\theta^*$.
If we do not trust our predictive models $p(y|\x,\theta^*)$ on OoD data, why should we trust our generative models $p(\x|\theta^*)$, given that both are based on the same, unreliable DNNs?
In analogy to the discriminative setting, we argue that OoD robustness can be achieved by capturing the epistemic uncertainty in the DGM parameters $\theta$.
This motivates the use of \emph{Bayesian} DGMs, which estimate a full \emph{distribution} $p(\theta|\sD)$ over parameters $\theta$ and thus capture many different density estimators to explain the data, yielding the \emph{expected/average} likelihood
\begin{equation}
\label{eq:expected_lik}
    p(\x|\sD) = \textstyle\int p(\x|\theta) p(\theta|\sD) d\theta = \E_{p(\theta|\sD)}[p(\x|\theta)] \ .
\end{equation}

\textbf{\emph{How} can we use Bayesian DGMs for OoD detection?}
Assume that given data $\sD$, we have inferred a distribution $p(\theta|\sD)$ over the parameters $\theta$ of a DGM.
In particular, we consider the case where $p(\theta|\sD)$ is represented by a set $\{ \theta_m \}_{m=1}^M$ of $M$ \emph{samples} $\theta_m \sim p(\theta|\sD)$, which can be viewed as an \emph{ensemble} of $M$ DGMs.
Our goal now is to decide if a given, new input $\x^*$ is in-distribution or OoD.
To this end, we \emph{refrain} from classifying $\x^*$ as OoD based on a threshold on the (miscalibrated) likelihoods that one or more of the models $\{ \theta_m \}_{m=1}^M$ assign to $\x^*$ \cite{bishop1994}.%
\footnote{Note that $p(\x^*|\sD) \simeq \frac{1}{M} \sum_{m=1}^M p(\x^*|\theta_m)$ in \cref{eq:expected_lik} remains unreliable if $\x^*$ is OoD. E.g., \citet{nalisnick2018} show that averaging likelihoods across an ensemble of DGMs \emph{does not help}.}
\textbf{Instead, we propose to use a threshold on a measure $D[\cdot]$ of the \emph{variation} or \emph{disagreement} in the likelihoods $\{p(\x^*|\theta_m)\}_{m=1}^M$ of the different models $\{ \theta_m \}_{m=1}^M$, i.e., to classify an input $\x^*$ as OoD if $D[\{p(\x^*|\theta_m)\}_{m=1}^M] < \tau$}.
In particular, if the models $\{ \theta_m \}_{m=1}^M$ \emph{agree} as to how probable $\x^*$ is, then $\x^*$ likely is an \emph{in-distribution} input.
Conversely, if the models $\{ \theta_m \}_{m=1}^M$ \emph{disagree} as to how probable $\x^*$ is, then $\x^*$ likely is an \emph{OoD} input.%
\footnote{If the $\{ \theta_m \}_{m=1}^M$ were perfect density estimators, they would all \emph{agree} that an OoD input $\x^*$ is \emph{unlikely}. But, model uncertainty makes the DGMs \emph{disagree} on $\{p(\x^*|\theta_m)\}_{m=1}^M$ if $\x^*$ is OoD.}
This intuitive decision rule is a direct consequence of the property that the epistemic uncertainty of a parametric model $\theta$ (which is exactly what the variation/disagreement across models $\{ \theta_m \}_{m=1}^M$ captures) is naturally low for in-distribution and high for OoD inputs -- the very same property that makes Bayesian discriminative DNNs robust to OoD inputs. 

\subsection{Quantifying Disagreement between Models}
\label{subsec:ood_input}
We propose the following score $D_\Theta[\x^*]$ to quantify the disagreement or variation in the likelihoods $\{p(\x^*|\theta_m)\}_{m=1}^M$ of a set $\{ \theta_m \}_{m=1}^M$ of model parameter samples $\theta_m \sim p(\theta|\sD)$:
\begin{flalign}
\label{eq:D_ess}
    D_\Theta[\x^*] = \textstyle\frac{1}{\sum_{\theta \in \Theta} w_\theta^2}, \quad \text{with} \quad w_\theta = \textstyle\frac{p(\x^*|\theta)}{\sum_{\theta \in \Theta} p(\x^*|\theta)}\ .
\end{flalign}
I.e., the likelihoods $\{p(\x^*|\theta_m)\}_{m=1}^M$ are first \emph{normalized} to yield $\{w_{\theta_m}\}_{m=1}^M$ (see \cref{eq:D_ess}), such that $w_\theta \in [0,1]$ and $\sum_{\theta \in \Theta} w_\theta = 1$.
The normalized likelihoods $\{w_{\theta_m}\}_{m=1}^M$ effectively define a categorical distribution over models $\{ \theta_m \}_{m=1}^M$, where each value $w_\theta$ can be interpreted as the probability that $\x^*$ was generated from the model $\theta$, thus measuring how well $\x^*$ is \emph{explained} by the model $\theta$, relative to the other models.
To obtain the score $D_\Theta[\x^*]$ in \cref{eq:D_ess}, we then square the normalized likelihoods, sum them up, and take the reciprocal.
Note that $D_\Theta[\x^*] \in [1,M], \forall \x^*$.%
For \emph{latent points} $\z^* \in \sZ$, we take into account all possible inputs $\x^* \in \sX$ corresponding to $\z^*$, yielding the \emph{expected} disagreement $D_\Theta[\z^*] =  \E_{p(\x|\z^*)}\left[ D_\Theta[\x] \right] \simeq \textstyle\frac{1}{N} \textstyle\sum_{n=1}^N D_\Theta[\x_n]$, with inputs $\x_n$ sampled from the conditional distribution $p(\x|\z^*)$, and $D_\Theta[\x]$ defined as in \cref{eq:D_ess}.

As $D_\Theta[\cdot]$ measures the degree of disagreement
between the models $\{ \theta_m \}_{m=1}^M$ as to how probable $\x^*$/$\z^*$ is, it can be used to classify $\x^*$/$\z^*$ as follows:
If $D_\Theta[\cdot]$ is large, then $[w_\theta]_{\theta \in \Theta}$ is close to the (discrete) uniform distribution $[\frac{1}{M}]_{\theta \in \Theta}$ (for which $D_\Theta[\cdot] = M$), meaning that all models $\theta \in \Theta$ explain $\x^*$/$\z^*$ equally well and are in agreement as to how probable $\x^*$/$\z^*$ is. Thus, $\x^*$/$\z^*$ likely is \emph{in-distribution}.
Conversely, if $D_\Theta[\cdot]$ is small, then $[w_\theta]_{\theta \in \Theta}$ contains a few large weights (i.e., corresponding to models that by chance happen to explain $\x^*$/$\z^*$ well), with all other weights being very small, where in the extreme case, $[w_\theta]_{\theta \in \Theta} = [0, \ldots, 0, 1, 0, \ldots, 0]$ (for which $D_\Theta[\cdot] = 1$). This means that the models do not agree as to how probable $\x^*$/$\z^*$ is, so that $\x^*$/$\z^*$ likely is \emph{OoD}.

As we argue in more detail in \cref{sec:info_perspective}, there is a principled justification for the disagreement score $D_\Theta[\x^*]$ in \cref{eq:D_ess}, which induces an \emph{information-theoretic perspective} on OoD detection.
In particular, $D_\Theta[\x^*]$ can be viewed as quantifying the \emph{informativeness} of $\x^*$ for updating the DGM parameters $\theta$ to the ones capturing the true density.
The OoD detection mechanism described above can thus be intuitively summarised as follows:
\emph{In-distribution} inputs $\x^*$ are similar to the data points already in $\sD$ and thus \emph{uninformative} about the model parameters $\theta$, inducing \emph{small} change in the posterior distribution $p(\theta|\sD)$, resulting in a \emph{large} score $D_\Theta[\x^*]$.
Conversely, \emph{OoD} inputs $\x^*$ are very different from the previous observations in $\sD$ and thus \emph{informative} about the model parameters $\theta$, inducing \emph{large} change in the posterior $p(\theta|\sD)$, resulting in a \emph{small} score $D_\Theta[\x^*]$.
This perspective on OoD detection reveals a close relationship to \emph{information-theoretic active learning} \cite{mackay1992,houlsby2011}.
There, the same notion of \emph{informativeness} (or, equivalently, \emph{disagreement}) is used to quantify the \emph{novelty} of an input $\x^*$ to be added to the data $\sD$, aiming to maximally improve the estimate of the model parameters $\theta$ by maximally \emph{reducing the entropy / epistemic uncertainty} in the posterior $p(\theta|\sD)$.

\section{The Bayesian Variational Autoencoder (BVAE)}
\label{sec:bvae}
As an example of a Bayesian DGM, we propose a \emph{Bayesian VAE} (BVAE), where instead of fitting the model parameters $\theta$ via (approximate) MLE, $\theta_{\text{MLE}} = \operatorname{arg\,max}_\theta \L_{\theta,\phi}(\sD)$, to get the likelihood $p(\x|\z, \theta_{\text{MLE}})$, we place a prior $p(\theta)$ over $\theta$ and estimate its posterior $p(\theta|\sD) \propto p(\sD|\theta) p(\theta)$, yielding the likelihood $p(\x|\z, \sD) = \int p(\x | \z, \theta) p(\theta| \sD) d\theta$.
The \emph{marginal likelihood}
$p(\x | \sD) = \textstyle\int\textstyle\int p(\x | \z, \theta) p(\z) d\z p(\theta| \sD) d\theta$
thus integrates out \emph{both} the latent variables $\z$ and model parameters $\theta$ (cf.\ \cref{eq:expected_lik}).
The resulting generative process draws a $\z \sim p(\z)$ from its prior and a $\theta \sim p(\theta|\sD)$ from its posterior, and then generates $\x \sim p(\x | \z, \theta)$.%
Training a BVAE thus requires Bayesian inference of \emph{both} the posterior $p(\z|\x, \sD)$ over $\z$ and the posterior $p(\theta | \sD)$ over $\theta$, which is both intractable and thus requires approximation.
We propose two variants for inferring
those posteriors
in a BVAE.
\begin{figure}[h]
	\begin{subfigure}[c]{0.5\textwidth}
	    \centering
    	\includegraphics[trim={4mm 2mm 5mm 3mm},clip,width=0.45\columnwidth]{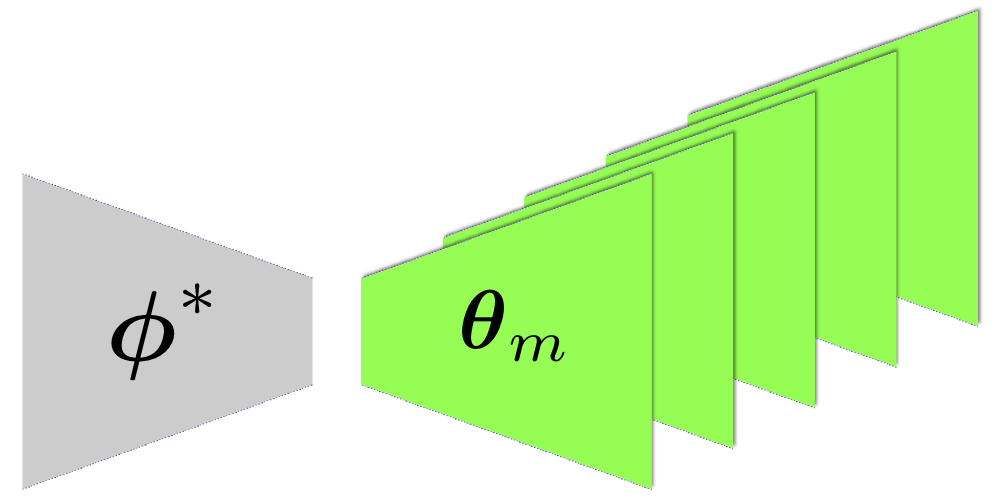}
    	\includegraphics[trim={4mm 5mm 5mm 5mm},width=0.45\columnwidth]{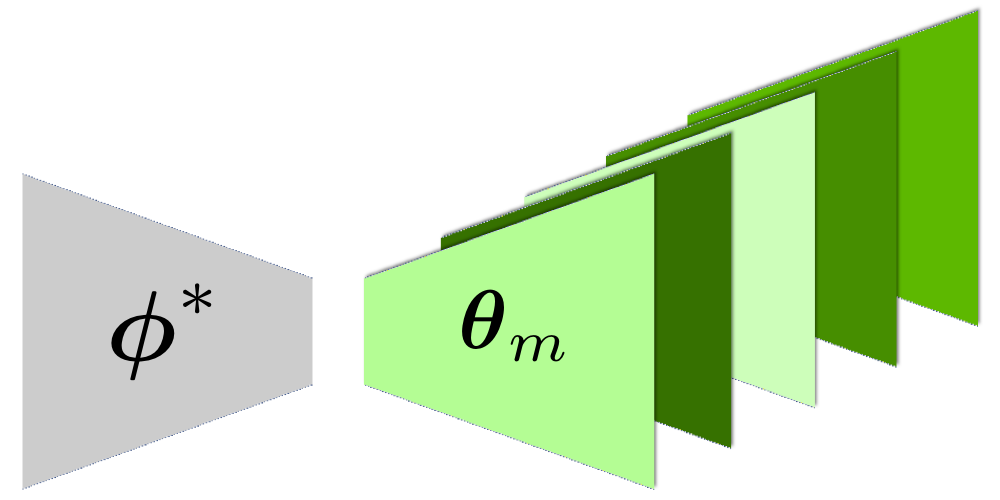}
    	\subcaption{Variant 1 with a shared encoder $\phi^*$ and $M$ decoders}
	\end{subfigure}
	\begin{subfigure}[c]{0.5\textwidth}
	    \centering
    	\includegraphics[trim={5mm 2mm 5mm 3mm},width=0.45\columnwidth]{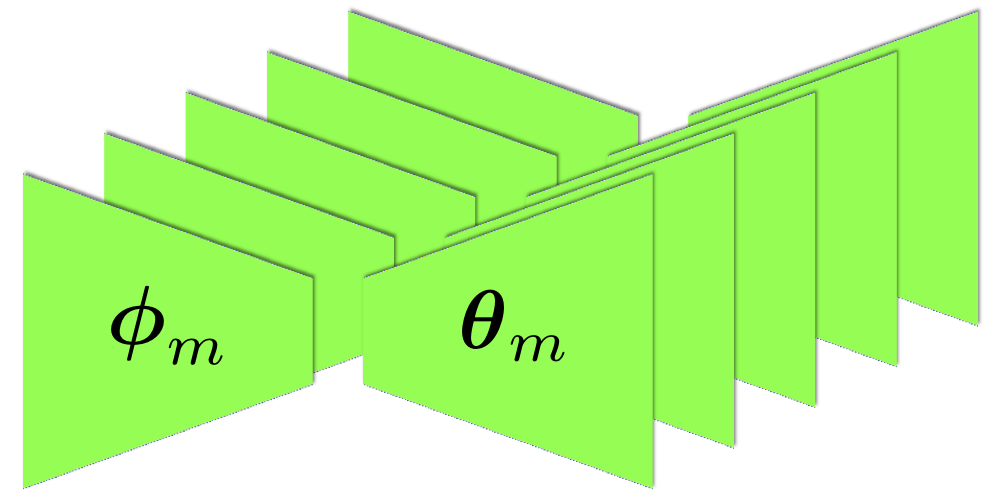}
    	\includegraphics[trim={5mm 5mm 5mm 5mm},width=0.45\columnwidth]{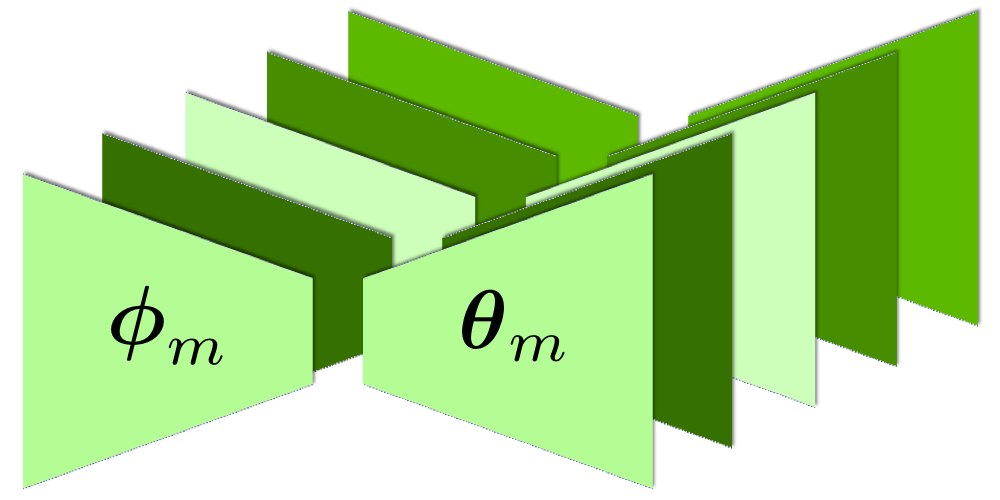}
    	\subcaption{Variant 2 as an ensemble of $M$ VAEs $(\phi_m, \theta_m)$}
	\end{subfigure}
	\caption{Illustrations of the (a) first and (b) second BVAE variant (with $M=5$), with (left) \emph{agreement} and (right) \emph{disagreement} in their likelihoods $p(\x^*|\theta_m)$ (as encoded by color intensity).}
	\label{fig:bvae}
\end{figure}
\subsection{Variant 1: BVAE with a Single Fixed Encoder}
\label{subsec:bvae_variant_1}
\textbf{a) Learning the encoder parameters $\phi$.}
As in a regular VAE, we approximate the posterior $p(\z | \x, \sD)$ using amortized VI via an inference network $q(\z | \x, \phi)$ whose parameters $\phi$ are fit by maximizing the ELBO $\L_{\theta,\phi}(\x)$ in \cref{eq:elbo_vae}, $\phi^* = \operatorname{arg\ max} \L_{\theta, \phi}(\x)$, yielding a single fixed encoder.
\iffalse
\begin{equation}
\label{eq:elbo_bvae}
    \L_{\phi}(\x) = \E_{q(\z|\x,\phi)}[ \log \E_{p(\theta|\sD)}\left[ p(\x | \z, \theta) \right] ] - \KL(q(\z|\x,\phi) \Vert p(\z)) \quad \leq \quad \log p(\x|\phi, \sD)
\end{equation}
(see \cref{sec:bvae_elbo_derivation} for the derivation).
Note that in contrast to the ELBO $\L_{\theta,\phi}(\x)$ of an ordinary VAE in \cref{eq:elbo_vae}, the ELBO $\L_{\phi}(\x)$ in \cref{eq:elbo_bvae} only depends on the variational parameters $\phi$, since the generator parameters $\theta$ are integrated over their posterior $p(\theta|\sD)$.
\fi
\iffalse
, using a low-variance, unbiased Monte Carlo estimator of the stochastic gradient $\nabla_\phi \L_{\theta,\phi}(\x)$, i.e.,
\begin{align}
\nonumber
    &\nabla_\phi \L_{\theta,\phi}(\x) \simeq\\
\label{eq:bvae_elbo_grad}
    &\textstyle\frac{1}{L} \textstyle\sum_{l=1}^L \nabla_\phi \log\left[ p(\x | \z_l, \theta) \right] - \nabla_\phi \KL(q(\z | \x, \phi) \Vert p(\z))
\end{align}
where the posterior samples $\z_l \sim q(\z|\x,\phi)$ are generated using the reparametrization trick.
\fi
\iffalse
\begin{equation}
\label{eq:bvae_elbo_grad}
    \nabla_\phi \L_{\phi}(\x) \simeq \textstyle\frac{1}{L} \textstyle\sum_{l=1}^L \nabla_\phi \log\left[ \textstyle\frac{1}{M} \textstyle\sum_{m=1}^M p(\x | \z_l, \theta_m) \right] - \nabla_\phi \KL(q(\z | \x, \phi) \Vert p(\z))%
\end{equation}
with posterior samples $\z_l \sim q(\z|\x,\phi)$ (using the reparametrization trick) and $\theta_m \sim p(\theta | \sD)$.
\fi

\textbf{b) Learning the decoder parameters $\theta$.}
\label{subsubsec:inference_theta}
To generate posterior samples $\theta \sim p(\theta|\sD)$ of decoder parameters,
we propose to use SGHMC (see \cref{sec:sgmcmc}).
However, the gradient of the energy function $\nabla_\theta U(\theta, \sM)$ in \cref{eq:grad_U_M} used for simulating the Hamiltonian dynamics requires evaluating the log-likelihood $\log p(\x|\theta)$, which is intractable in a (B)VAE.
To alleviate this, we approximate the log-likelihood in $\nabla_\theta U(\theta, \sM)$ by the ordinary VAE ELBO $\L_{\theta,\phi}(\x)$ in \cref{eq:elbo_vae}.
Given a set $\Theta = \{\theta_m\}_{m=1}^M$ of posterior samples $\theta_m \sim p(\theta | \sD)$, we can more intuitively think of having a finite mixture/ensemble of decoders/generative models $p(\x|\z, \sD) = \E_{p(\theta|\sD)}[p(\x | \z, \theta)] \simeq \frac{1}{M} \sum_{\theta \in \Theta} p(\x | \z, \theta)$.

\textbf{c) Likelihood estimation.}
This BVAE variant is effectively trained like a normal VAE, but using a sampler instead of an optimizer for $\theta$ (pseudocode is found in the appendix).
We obtain an ensemble of $M$ VAEs $(\phi^*, \theta_m)$ with a single \emph{shared} encoder $\phi^*$ and $M$ separate decoder samples $\theta_m$; see \cref{fig:bvae} (left) for a cartoon illustration.
For the $m$-th VAE, the likelihood $p(\x|\theta_m) \simeq \hat{p}(\x | \theta_m, \phi^*)$ can then be estimated via importance sampling w.r.t.\ $q(\z|\x,\phi^*)$, as in \cref{eq:is_vae}.
\subsection{Variant 2: BVAE with a \emph{Distribution} over Encoders}
\label{subsec:bvae_variant_2}
\textbf{a) Learning the encoder parameters $\phi$.}
\label{subsubsec:inference_phi}
Recall that amortized VI aims to \emph{learn} to do posterior inference, by optimizing the parameters $\phi^* = \operatorname{arg\,max}_\phi \L(\sD)_{\theta,\phi}$ (see \cref{eq:elbo_vae}) of an inference network
$i_\phi(\x) = \psi$
mapping inputs $\x$ to parameters $\psi$ of the variational posterior $q_\psi(\z) = q(\z|\x,\phi)$ over $\z$.
However, one major shortcoming of fitting a single encoder parameter setting $\phi^*$ is that $q(\z|\x,\phi^*)$ will \emph{not} generalize to OoD inputs, but will instead produce confidently wrong posterior inferences \cite{cremer2018,mattei2018} (cf.\ \cref{subsec:motivation}).
To alleviate this, we instead capture multiple encoders by inferring a \emph{distribution} over the variational parameters $\phi$.
While this might appear odd conceptually, it allows us to quantify our \emph{epistemic uncertainty in the amortized inference} of $\z$.
It might also be interpreted as regularizing the encoder \cite{shu2018}, or as increasing its flexibility \citep{yin2018semi}.
We thus also place a prior $p(\phi)$ over $\phi$ and infer the posterior $p(\phi|\sD)$, yielding the amortized posterior
$q(\z | \x, \sD) = \int q(\z | \x, \phi) p(\phi|\sD) d\phi$.
\iffalse
\begin{flalign}
\label{eq:qz_xD}
\hspace{-3mm}
    q(\z | \x, \sD) \hspace{-0.75mm}= \hspace{-0.75mm}\textstyle\int \hspace{-0.75mm}q(\z | \x, \phi) p(\phi|\sD) d\phi \hspace{-0.5mm}\simeq \hspace{-0.5mm}\textstyle\frac{1}{M} \textstyle\sum_{m=1}^M \hspace{-0.5mm}q(\z | \x, \phi_m) .
\hspace{-2mm}
\end{flalign}
\fi
We also use SGHMC to sample $\phi_m \sim p(\phi | \sD)$, again using the ELBO $\L_{\theta,\phi}(\x)$ in \cref{eq:elbo_vae} to compute $\nabla_\phi U(\phi, \sM)$ (see \cref{eq:grad_U_M} in \cref{sec:sgmcmc}).
Given a set $\Phi = \{\phi_m\}_{m=1}^M$ of posterior samples $\phi_m \sim p(\phi|\sD)$, we can again more intuitively think of having as a finite mixture/ensemble of encoders/inference networks $q(\z | \x, \sD) = \E_{p(\phi|\sD)}[q(\z | \x, \phi)] \simeq \frac{1}{M} \sum_{\phi \in \Phi} q(\z | \x, \phi)$.

\textbf{b) Learning the decoder parameters $\theta$.}
We sample $\theta \sim p(\theta|\sD)$ as in \cref{subsubsec:inference_theta}.
The only difference is that we now have the encoder mixture $q(\z|\x,\sD)$ instead of the single encoder $q(\z|\x,\phi)$, technically yielding the ELBO
$\L_{\theta}(\x) = \E_{q(\z | \x, \sD)} [\log p(\x | \z, \theta)] - \KL[q(\z|\x,\sD) \Vert p(\z)]$
\iffalse
\begin{flalign}
\label{eq:bvae_elbo_hmc}
\hspace{-2.5mm}
    \L_{\theta}(\x) \hspace{-0.5mm}= \hspace{-0.5mm}\E_{q(\z | \x, \sD)} [\log p(\x | \z, \theta)] \hspace{-0.5mm}- \hspace{-0.5mm}\KL[q(\z|\x,\sD) \Vert p(\z)]
\hspace{-1.5mm}
\end{flalign}
\fi
which depends on $\theta$ only, as %
$\phi$ is averaged
over $p(\phi|\sD)$.
However, in practice, we for simplicity only use the most recent sample $\phi_m \sim p(\phi|\sD)$ to estimate $q(\z | \x, \sD) \simeq q(\z | \x, \phi_m)$, such that $\L_{\theta}(\x)$ effectively reduces to the normal VAE ELBO in \cref{eq:elbo_vae} with fixed encoder $\phi_m$.
\iffalse
, i.e.,
\begin{align}
\nonumber
    &\nabla_\theta \L_{\theta}(\x) = \E_{q(\z | \x, \sD)} [\nabla_\theta \log p(\x | \z, \theta)]\\
\label{eq:grad_elbo_hmc}
    &\simeq \textstyle\frac{1}{L} \textstyle\sum_{l=1}^L \nabla_\theta \log p(\x | \z_l, \theta), \quad \z_l \sim q(\z | \x, \sD) \ .
\end{align}
\fi

\textbf{c) Likelihood estimation.}
This BVAE variant is effectively trained like a normal VAE, but using a sampler instead of an optimizer for \emph{both} $\phi$ and $\theta$ (pseudocode is found in the appendix).
We obtain an ensemble of $M$ VAEs $(\phi_m, \theta_m)$ with $M$ pairs of coupled encoder-decoder samples; see \cref{fig:bvae} (right) for a cartoon illustration.
For the $m$-th VAE, the likelihood $p(\x | \theta_m) \simeq \hat{p}(\x | \theta_m, \phi_m)$ can then be estimated via importance sampling w.r.t.\ $q(\z|\x,\phi_m)$, as in \cref{eq:is_vae}.

\section{Experiments}
\label{sec:experiments}
\subsection{Out-of-Distribution Detection in Input Space}
\label{subsec:exp_input}
\textbf{BVAE details.}
We assess both proposed BVAE variants: \texttt{BVAE$_1$} samples $\theta$ and optimizes $\phi$ (see \cref{subsec:bvae_variant_1}), while
\texttt{BVAE$_2$} samples \emph{both} $\theta$ and $\phi$ (see \cref{subsec:bvae_variant_2}).
Our \texttt{PyTorch} implementation uses Adam \cite{kingma2014adam} with learning rate $10^{-3}$ for optimization, and scale-adapted SGHMC with step size $10^{-3}$ and momentum decay $0.05$ \cite{springenberg2016} for sampling%
\footnote{We use the implementation of SGHMC
at \url{https://github.com/automl/pybnn}.}. %
Following \citet{chen2014,springenberg2016}, we place Gaussian priors over $\theta$ and $\phi$, i.e., $p(\theta) = \mathcal{N}(0, \lambda_\theta^{-1})$ and $p(\phi) = \mathcal{N}(0, \lambda_\phi^{-1})$, and Gamma hyperpriors over the precisions $\lambda_\theta$ and $\lambda_\phi$, i.e., $p(\lambda_\theta) = \Gamma(\alpha_\theta, \beta_\theta)$ and $p(\lambda_\phi) = \Gamma(\alpha_\phi, \beta_\phi)$, with $\alpha_\theta = \beta_\theta = \alpha_\phi = \beta_\phi = 1$, and resample $\lambda_\theta$ and $\lambda_\phi$ after every training epoch (i.e., a full pass over $\sD$).
We discard samples within a burn-in phase of $B = 1$ epoch and store a sample
after every $D = 1$ epoch, which we found to be robust and effective choices.

\textbf{Experimental setup.}
Following previous works, we use three benchmarks:
(a) FashionMNIST (in-distribution) vs.\ MNIST (OoD) \citep{hendrycks2018,nalisnick2018,zenati2018,akcay2018,ren2019},
(b) SVHN (in-distribution) vs.\ CIFAR10 (OoD)%
\footnote{Unlike \citet{nalisnick2018}, we found the likelihood calibration to be poor on this benchmark (\cref{fig:curves}, bottom middle, shows the overlap in likelihoods) and decent on the opposite benchmark.}
\citep{hendrycks2018,nalisnick2018,choi2018}, and
(c) eight classes of FashionMNIST (in-distribution) vs.\ the remaining two classes (OoD), using five different splits $\{(0,1), (2,3), (4,5), (6,7), (8,9)\}$ of held-out classes \citep{ahmed2019}.
We compare against the log-likelihood (\texttt{LL}) as well as all three state-of-the-art methods for unsupervised OoD detection described in \cref{sec:related_work}:
(1) The generative ensemble based method by \citet{choi2018} composed of five independently trained models (\texttt{WAIC}),
(2) the likelihood ratio method by \citet{ren2019} (\texttt{LLR}), using Bernoulli rates $\mu = 0.2$ for the FashionMNIST vs.\ MNIST benchmark \citep{ren2019}, and $\mu = 0.15$ for the other benchmarks, and
(3) the test for typicality by \citet{nalisnick2019} (\texttt{TT}).
All methods use VAEs for estimating log-likelihoods.%
\footnote{Note that like most previous OoD detection approaches, our method is agnostic to the specific deep generative model architecture used, and can thus be straightforwardly combined with any state-of-the-art architecture for maximal effectiveness in practice. For comparability, and to isolate the benefit or our method, we follow the same experimental protocol and use the same convolutional VAE architecture as in previous works \citet{nalisnick2018,choi2018,ren2019}.}
For evaluation, we randomly select $5000$ in-distribution and OoD inputs from held-out test sets and compute the following, threshold independent metrics \citep{hendrycks2016,liang2017,hendrycks2018,alemi2018,ren2019}: (i) The area under the ROC curve (AUROC$\uparrow$), (ii) the area under the precision-recall curve (AUPRC$\uparrow$), and (iii) the false-positive rate at $80\%$ true-positive rate (FPR80$\downarrow$).

\textbf{Results.}
\cref{tab:results} shows that both BVAE variants \emph{significantly outperform} the other methods on the considered benchmarks.
\cref{fig:curves} shows the ROC curves used to compute the AUROC metric in \cref{tab:results}, for the FashionMNIST vs.\ MNIST (top left) and SVHN vs.\ CIFAR10 (bottom left) benchmarks; ROC curves for the FashionMNIST (held-out) benchmark as well as precision-recall curves for all benchmarks are found in \cref{sec:additional_exp}.
\texttt{BVAE$_2$} outperforms \texttt{BVAE$_1$} on FashionMNIST vs.\ MNIST and SVHN vs.\ CIFAR10, where in-distribution and OoD data is very distinct, but not on FashionMNIST (held-out), where the datasets are much more similar.
This suggests that capturing a distribution over encoders $\phi$ is particularly beneficial when train and test data live on different manifolds (as overfitting $\phi$ is more critical), while the fixed encoder $\phi^*$ generalizes better when train and test manifolds are similar, which is as expected intuitively.
Finally, \cref{fig:curves} shows histograms of the log-likelihoods (top middle) and of the \texttt{BVAE$_2$} scores (top right) on FashionMNIST in-distribution (blue) vs.\ MNIST OoD (orange).
While the log-likelihoods strongly overlap, our proposed score more clearly separates in-distribution data (closer to the r.h.s.) from OoD data (closer to the l.h.s.).
The corresponding histograms for the SVHN vs.\ CIFAR10 task, showing similar behaviour, are shown in \cref{sec:additional_exp}.
Finally, note that to further boost performance (of all methods, not just ours), one can simply use a deep generative model architecture that is more sophisticated than the convolutional VAE that is part of the established experimental protocol which we follow for comparability \citet{nalisnick2018,choi2018,ren2019}.
\begin{table*}[ht]
    \footnotesize
    \caption{AUROC$\uparrow$, AUPRC$\uparrow$, and FPR80$\downarrow$ scores (where higher $\uparrow$ or lower $\downarrow$ is better) of our methods (top two rows) and the baselines (bottom four rows) . For the experiment on FashionMNIST with with held-out classes, we report the mean scores over all five class splits.}
    \label{tab:results}
    \centering
    \begin{tabu}{lcccccccccccc}
        \toprule
        & \multicolumn{3}{c}{\textbf{FashionMNIST vs MNIST}} & \multicolumn{3}{c}{\textbf{SVHN vs CIFAR10}} & \multicolumn{3}{c}{\textbf{FashionMNIST (held-out)}}\vspace{0.5mm}\\
        \cline{2-10}\vspace{-3mm}\\
        & AUROC & AUPRC & FPR80 & AUROC & AUPRC & FPR80 & AUROC & AUPRC & FPR80\\
        \midrule
        \texttt{BVAE$_1$} & 0.904 & 0.891 & 0.117 & 0.807 & 0.793 & 0.331 & \textbf{0.693} & \textbf{0.680} & \textbf{0.540}\\
        \texttt{BVAE$_2$} & \textbf{0.921} & \textbf{0.907} & \textbf{0.082} & \textbf{0.814} & \textbf{0.799} & \textbf{0.310} & 0.683 & 0.668 & 0.558\\
        \midrule
        \texttt{LL} & 0.557 & 0.564 & 0.703 & 0.574 & 0.575 & 0.634 & 0.565 & 0.577 & 0.683\\
        \texttt{LLR} & 0.617 & 0.613 & 0.638 & 0.570 & 0.570 & 0.638 & 0.560 & 0.569 & 0.698\\
        \texttt{TT} & 0.482 & 0.502 & 0.833 & 0.395 & 0.428 & 0.859 & 0.482 & 0.496 & 0.806\\
        \texttt{WAIC} & 0.541 & 0.548 & 0.798 & 0.293 & 0.380 & 0.912 & 0.446 & 0.464 & 0.827\\
        \bottomrule
    \end{tabu}
\end{table*}
\begin{figure*}%
	\centering
    \raisebox{-0.5\height}{\includegraphics[trim={2mm 8mm 0 2.5mm},clip,width=0.35\textwidth]{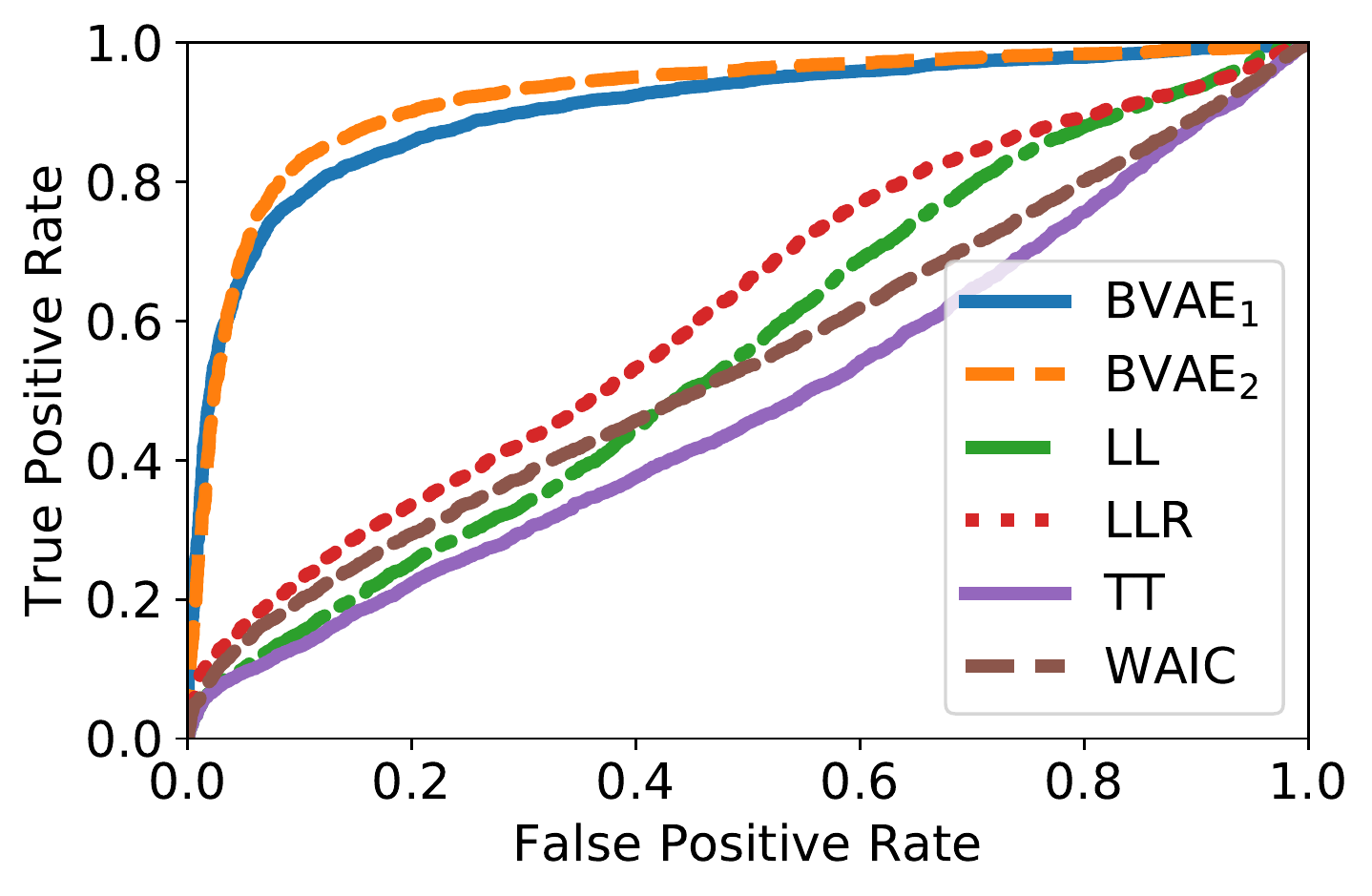}}
	\raisebox{-0.48\height}{\includegraphics[trim={8mm 0 0 0},clip,width=0.32\textwidth]{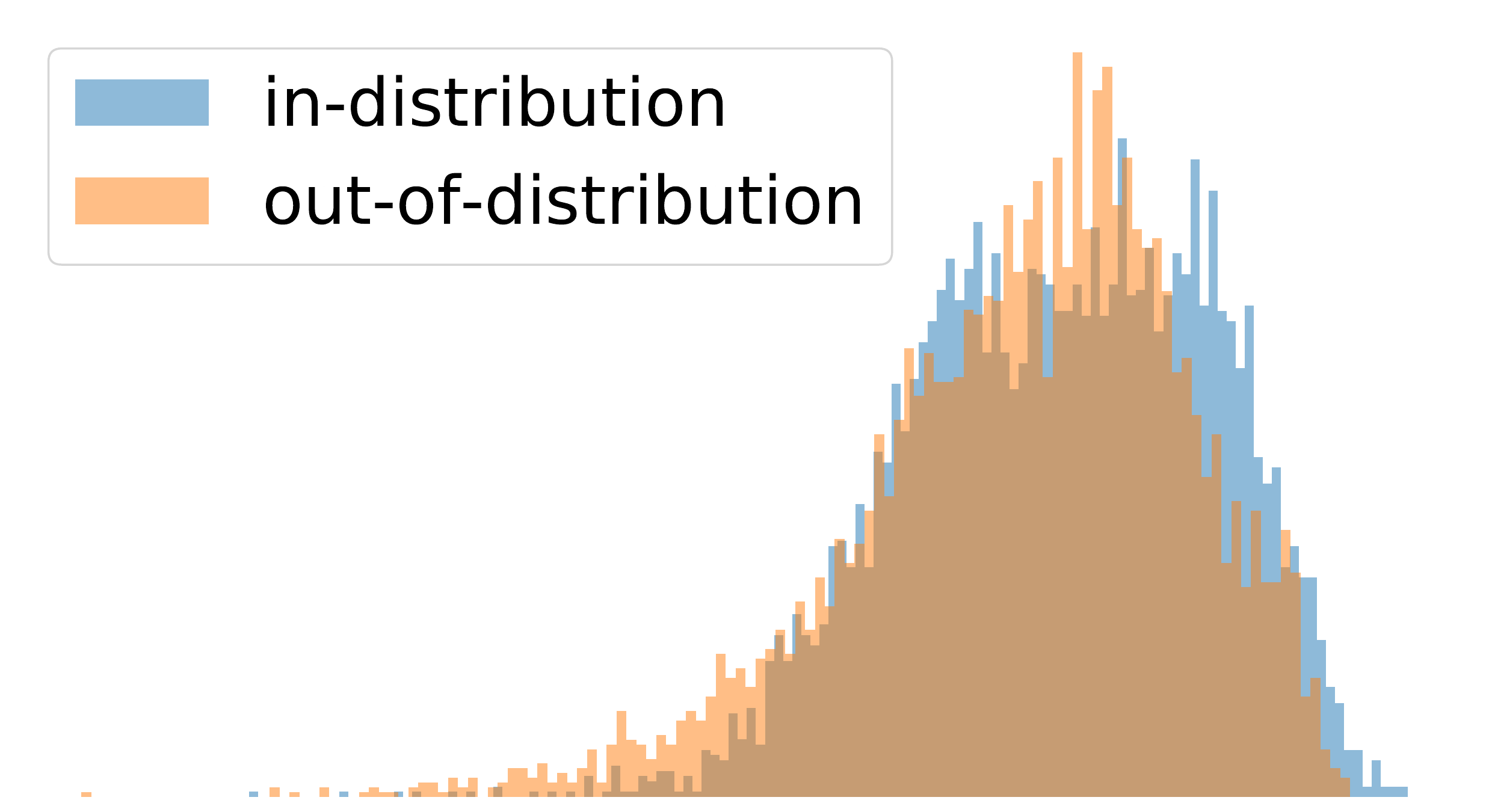}}
	\raisebox{-0.5\height}{\includegraphics[trim={12mm 0 7.5mm 0},clip,width=0.29\textwidth]{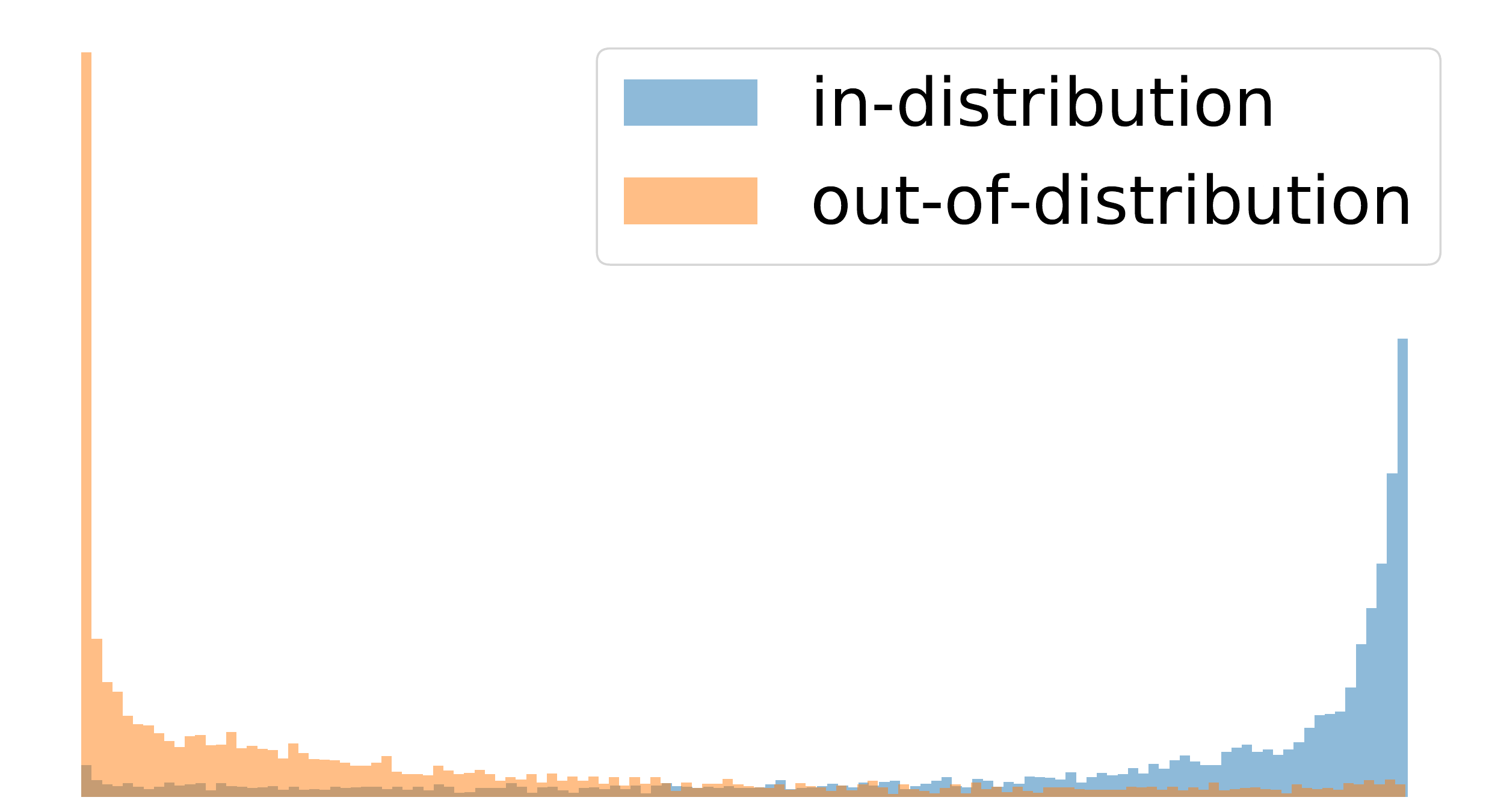}}
    \hspace{-40mm}
    \raisebox{-0.5\height}{\includegraphics[trim={2mm 3.5mm 0 2.5mm},clip,width=0.35\textwidth]{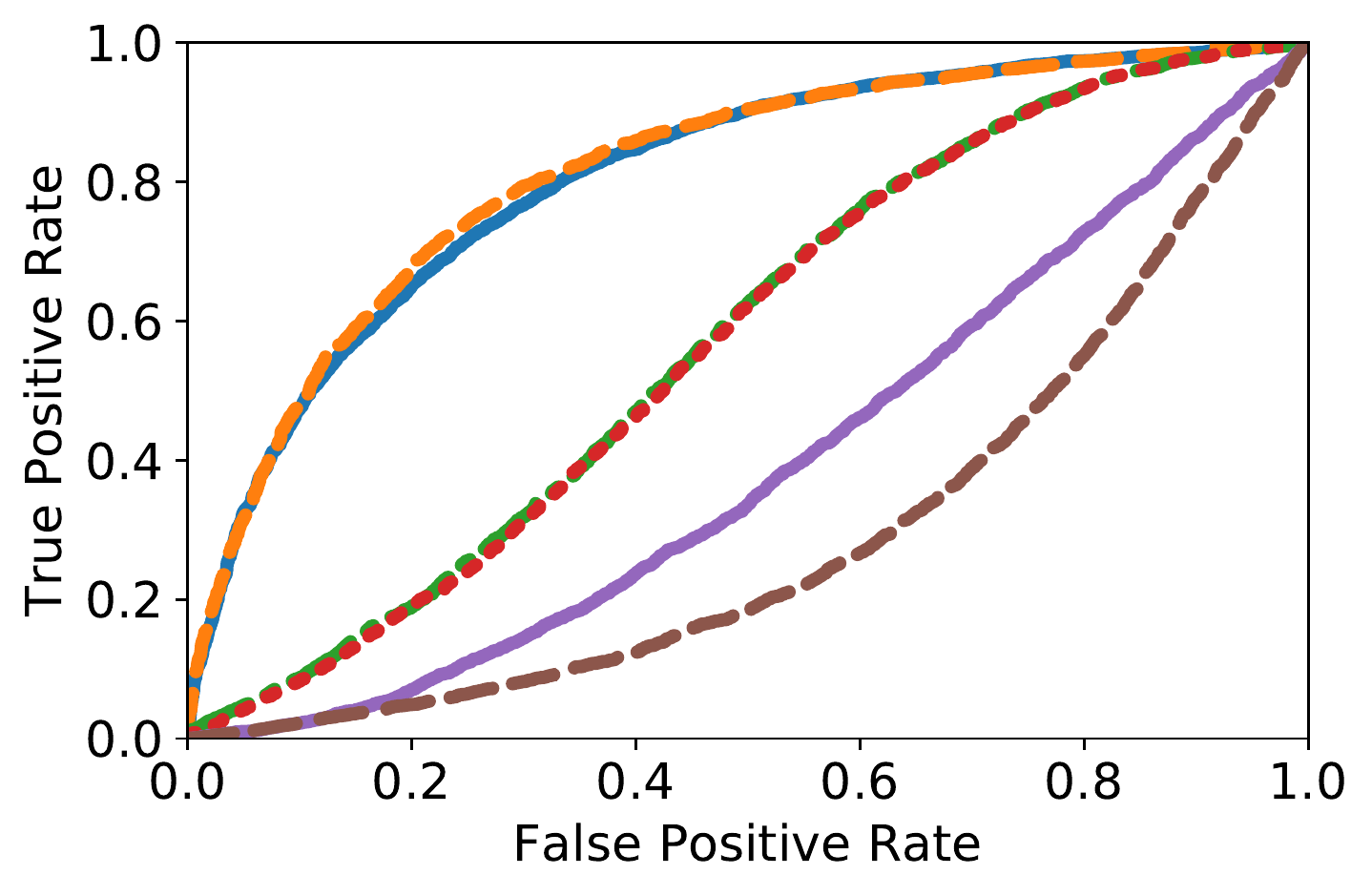}}
    \hspace{20mm}
	\raisebox{-0.46\height}{\includegraphics[trim={318mm 0 1mm 5.5mm},clip,width=0.30\textwidth]{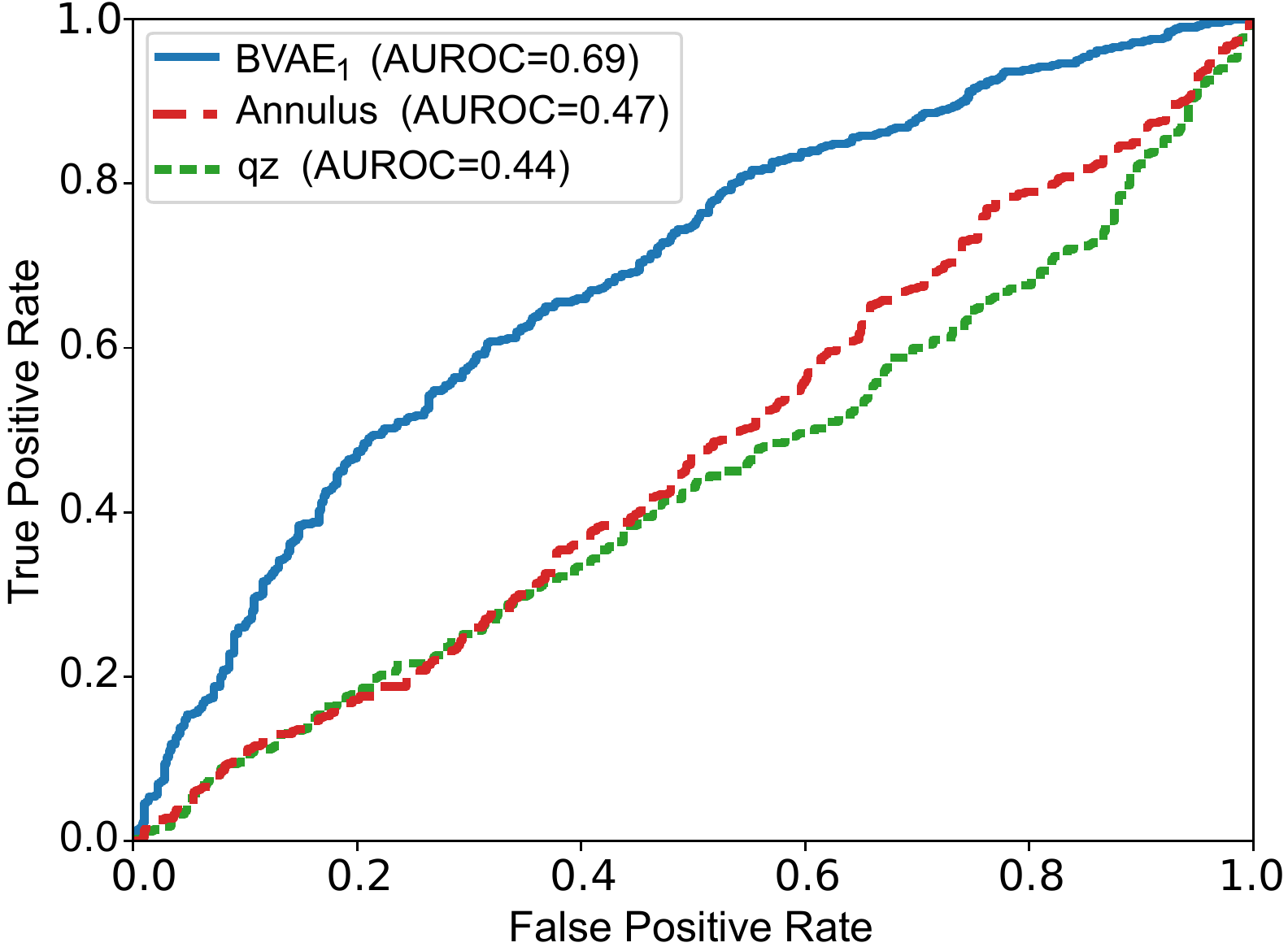}}
    \caption{(Top row) ROC curves (left) and histograms of \texttt{LL} (middle) and  \texttt{BVAE$_2$} (right) scores on the FashionMNIST vs.\ MNIST task; \texttt{BVAE$_2$} separates in-distribution and OoD data much more clearly than \texttt{LL}. (Bottom row) ROC curves for the SVHN vs.\ CIFAR10 (left) and latent space (right) task. }
    \label{fig:curves}
\end{figure*}
\iffalse
\begin{figure*}%
    \includegraphics[trim={16cm 0 16cm 0},clip,width=0.33\textwidth]{img/FashionMNIST_vs_MNIST_curves.pdf}
    \includegraphics[trim={16cm 0 16cm 0},clip,width=0.33\textwidth]{img/SVHN_vs_CIFAR10_curves.pdf}
    \includegraphics[trim={0 0 40cm 0},clip,width=0.33\textwidth]{img/latent_ood.png}
    \includegraphics[trim={33cm 0 0 0},clip,width=0.33\textwidth]{img/FashionMNIST_vs_MNIST_curves.pdf}
    \includegraphics[trim={33cm 0 0 0},clip,width=0.33\textwidth]{img/SVHN_vs_CIFAR10_curves.pdf}
    \includegraphics[trim={40cm 0 0 0},clip,width=0.33\textwidth]{img/latent_ood.png}
    \caption{(Top) Precision-recall and (bottom) ROC curves of all methods on the (left) FashionMNIST vs.\ MNIST, (middle) SVHN vs.\ CIFAR10, and (right) latent space OoD detection benchmark.}
    \label{fig:curves}
\end{figure*}
\fi
\subsection{Out-of-Distribution Detection in Latent Space}
\label{subsec:exp_latent}
While input space OoD detection is well-studied, \emph{latent space} OoD detection has only recently been identified as a critical open problem \citep{griffiths2017,gomez2018,mahmood2019,alperstein2019} (see also \cref{sec:intro}).
Thus, there is a lack of suitable experimental benchmarks, making a quantitative evaluation challenging.
A major issue in designing benchmarks based on commonly-used datasets such as MNIST is that it is unclear how to obtain \emph{ground truth labels} for which latent points are OoD and which are not, as we require OoD labels for \emph{all possible} latent points $\z^* \in \sZ$, not just for those corresponding to inputs $\x^*$ from the given dataset.
As a \emph{first step} towards facilitating a systematic empirical evaluation of latent space OoD detection techniques, we propose the following experimental protocol.
We use the \texttt{BVAE$_1$} variant (see \cref{subsec:exp_input}), as latent space detection does not require encoder robustness.
We train the model on FashionMNIST (or potentially any other dataset), and then sample $N = 10,000$ latent test points $\z^*$ from the Gaussian $\mathcal{N}(\mathbf{0}, b \cdot \sI_d)$ where $b \in \sR^+$ (we use $b = 10,000$), following \citet{mahmood2019}.
Since there do not exist ground truth labels for which latent points $\z^*$ are OoD or not, we compute a classifier-based \emph{OoD proxy score} (to be detailed below) for each of the $N$ latent test points and then simply \emph{define} the $N/2$ latents with the lowest scores to be in-distribution, and all others to be OoD.

To this end, we train an ensemble \citep{lakshminarayanan2017} of $J$ convolutional NN classifiers with parameters $\mathbf{W} = \{\w_j\}_{j=1}^J$ on FashionMNIST.
We then approximate the novelty score for discriminative models proposed by \citet{houlsby2011}, i.e.,
$Q_\mathbf{W}[\x^*] = \sH\left( \frac{1}{J} \sum_{\w \in \mathbf{W}} p(y|\x^*, \w) \right) - \frac{1}{J} \sum_{\w \in \mathbf{W}} \sH(p(y|\x^*,\w))$%
, where the first term is the entropy of the mixture $\frac{1}{J} \sum_{\w \in \mathbf{W}} p(y|\x^*, \w)$ of categorical distributions $p(y|\x^*, \w)$ (which is again categorical with averaged probits), and the second term is the average entropy of the predictive class distribution of the classifier with parameters $\w$.
Alternatively, one could also use the closely related OoD score $\sum_{\w \in \mathbf{W}} \KL(p(y|\x^*, \w) \| p(y|\x^*, \sD))$ of \citet{lakshminarayanan2017}.
Since $Q_\mathbf{W}[\x^*]$ requires a test input $\x^*$, and we only have the latent code $\z^*$ corresponding to $\x^*$ in our setting, we instead consider the \emph{expected} novelty under the mixture decoding distribution $p(\x|\z^*,\sD)$,
$\E_{p(\x|\z^*,\sD)}[Q_\mathbf{W}[\x]] \simeq \frac{1}{L} \sum_{l=1}^L Q_\mathbf{W}[\x_l]$ with $\x_l \sim p(\x|\z^*,\sD)$.
In practice, we use an ensemble of $J=5$ classifiers and $L=32$ input samples for the expectation.
We compare the \texttt{BVAE$_1$} model with our \emph{expected disagreement} score $D_\Theta[\z^*]$ (see \cref{subsec:ood_input}, with $N=32$ samples) against two baselines (which are the only existing methods we are aware of):
(a)~The distance of $\z^* \in \sR^d$ to the spherical annulus of radius $\sqrt{d-1}$, which is where most probability mass lies under our prior $\mathcal{N}(\mathbf{0}, \sI_d)$ (\texttt{Annulus}) \citep{alperstein2019}, and
(b)~the log-probability of $\z^*$ under the \emph{aggregated posterior} of the training data in latent space $q(\z) = \frac{1}{N} \sum_{\x \in \sD} q(\z|\x, \phi)$, i.e., a uniform mixture of $N$ Gaussians in our case (\texttt{qz})\footnote{For efficiency, we only consider the $100$ nearest neighbors (found by a $100$-NN model) of a latent test point $\z^*$ for computing this log-probability \citep{mahmood2019}.} \citep{mahmood2019}.
\cref{fig:curves} (bottom right) shows that our method significantly outperforms the two baselines on this task.

\section{Related Work}
\label{sec:related_work}
\iffalse
"There have been a number of recently proposed techniques. This body of work can be divided into two broad categories: (i) methods that analyze the output of networks trained in a standard fashion to detect out of sample distributions, and (ii) methods that build robust neural networks through modified loss functions including work on Bayesian networks as well as regularized networks. We note that techniques from (i) and (ii) are independent of each other in that techniques from (i) can be combined with techniques from (ii) to produce more effective results." \cite{sricharan2018}
\fi
\textbf{Supervised/Discriminative OoD detection methods.}
Most existing OoD detection approaches are \emph{task-specific} in that they are applicable within the context of a given prediction task.
As described in \cref{sec:intro}, these approaches train a \emph{deep discriminative model} in a \emph{supervised} fashion using the given labels.
To detect outliers w.r.t.\ the target task, such approaches typically rely on some sort of \emph{confidence score} to decide on the reliability of the prediction, which is either produced by modifying the model and/or training procedure, or computed/extracted post-hoc from the model and/or predictions
\citep{an2015,solch2016,hendrycks2016,liang2017,hendrycks2018,shafaei2018,devries2018,sricharan2018,ahmed2019}.
Alternatively, some methods use predictive uncertainty estimates for OoD detection \citep{gal2016,lakshminarayanan2017,malinin2018,osawa2019,ovadia2019} (cf.\ \cref{subsec:motivation}).
The main drawback of such approaches is that discriminatively trained models by design discard all input features which are not informative about the specific prediction task at hand, such that information that is relevant for general OoD detection might be lost.
Thus, whenever the task changes, the predictive (and thus OoD detection) model must be re-trained from scratch, even if the input data remains the same.

\textbf{Unsupervised/Generative OoD detection methods.}
\emph{task-agnostic} OoD detection methods solely use the inputs for the \emph{unsupervised} training of a \emph{DGM} to capture the data distribution, which makes them independent of any prediction task and thus more general.
Only a few recent works fall into this category.
\citet{ren2019} propose to correct the likelihood $\log p(\x^* | \theta)$ for confounding general population level background statistics captured by a background model $p(\x^*|\theta_0)$, resulting in the score $\log p(\x^* | \theta) - \log p(\x^* | \theta_0)$.
The background model $p(\x^*|\theta_0)$ is in practice trained by perturbing the data $\sD$ with noise to corrupt its semantic structure, i.e., by sampling input dimensions i.i.d.\ from a Bernoulli distribution with rate $\mu \in [0.1,0.2]$ and replacing their values by uniform noise, e.g. $x_i \sim \gU\{0,\ldots,255\}$ for images.
\citet{choi2018} propose to use an ensemble \citep{lakshminarayanan2017} of independently trained likelihood-based DGMs (i.e., with random parameter initializations and random data shuffling) to approximate the \emph{Watanabe-Akaike Information Criterion} (WAIC) \citep{watanabe2010} $\E_{p(\theta|\sD)}[\log p(\x^*|\theta)] - \text{Var}_{p(\theta|\sD)}[\log p(\x^*|\theta)]$, which provides an asymptotically correct likelihood estimate between the training and test set expectations (however, assuming a \emph{fixed} underlying data distribution).
Finally, \citet{nalisnick2019} propose to account for the \emph{typicality} of $\x^*$ via the score $\left| \log p(\x^*|\theta) - \frac{1}{N} \sum_{\x \in \sD} \log p(\x|\theta) \right|$, although they focus on \emph{batches} of test inputs instead of single inputs.

\textbf{Latent space OoD detection.}
Many important problems in science and engineering involve optimizing an expensive black-box function over a highly structured (i.e., discrete or non-Euclidian) input space, e.g.\ graphs, sequences, or sets.
Such problems are often tackled using \emph{Bayesian optimization} (BO), which is an established framework for sample-efficient black-box optimization that has, however, mostly focused on continuous input spaces \cite{brochu2010tutorial,shahriari2015taking}.
To extend BO to structured input spaces, recent works either designed dedicated models and acquisition procedures to optimize structured functions in input space directly \cite{baptista2018bayesian,kim2019bayesian,daxberger2020mixed,oh2019combinatorial}, or instead train a deep generative model (e.g.\ a VAE) to map the structured input space onto a continuous latent space, where the optimization can then be performed using standard continuous BO techniques \citep{gomez2018}.
Despite recent successes of the latter so-called \emph{latent space optimization} approach in areas such as automatic chemical design and automatic machine learning \citep{gomez2018,kusner2017,nguyen2016synthesizing,luo2018,lu2018,jin2018junction,tripp2020sample}, it often progresses into regions of the latent space too far away from the training data, yielding meaningless or even invalid inputs.
A few recent works have tried to detect/avoid the progression into out-of-distribution regions in latent space, by either designing the generative model to produce valid inputs \citep{kusner2017,dai2018syntax}, or by explicitly constraining the optimization to stay in-distribution, which is quantified using certain proxy metrics \citep{griffiths2017,mahmood2019}.

\textbf{Bayesian DGMs.}
Only a few works have tried to do Bayesian inference in DGMs, none of which addresses OoD detection.
While \citet{kingma2013} describe how to do VI over the decoder parameters of a VAE (see their Appendix F), this is neither motivated nor empirically evaluated.
\citet{hernandez2016importance} do mean-field Gaussian VI over the encoder and decoder parameters of an importance-weighted autoencoder \cite{burda2015} to increase model flexibility and improve generalization performance.
\citet{nguyen2017variational} do mean-field Gaussian VI over the decoder parameters of a VAE to enable continual learning.
\citet{saatci2017bayesian} use SG-MCMC to sample the parameters of a generative adversarial network \citep{goodfellow2014generative} to increase model expressiveness.
\citet{gong2019icebreaker} use SG-MCMC to sample the decoder parameters of a VAE for feature-wise active learning.

\section{Conclusion}
We proposed an effective method for unsupervised out-of-distribution detection, both in input space and in latent space, which uses information-theoretic metrics based on the posterior distribution over the parameters of a deep generative model (in particular a VAE).
In the future, we want to explore extensions to
other approximate inference techniques (e.g. variational inference \cite{blei2017}), and
to other deep generative models (e.g., flow-based \cite{kingma2018} or auto-regressive \cite{van2016} models).
Finally, we hope that this paper will inspire many follow-up works that will (a) develop further benchmarks and methods for the underappreciated yet critical problem of \emph{latent space} OoD detection, and (b) further explore the described paradigm of information-theoretic OoD detection, which might be a promising approach towards the grand goal of making deep neural networks more reliable and robust.

\iffalse
As an alternative to MCMC, one could consider different approximate inference approaches to estimate the posterior over the parameters of the neural network, such as variational inference (VI).
While using MCMC makes our method a bit slower than a VI-based approach during training, this is certainly made up for by the resulting gain in accuracy at test time: At test time, probing an outlier is very fast (it just involves a few sampling operations and forward passes, which can all be parallelized).
For very critical applications, we can thus even use full HMC for training (i.e., to generate high-quality posterior samples), which doesn't affect test efficiency at all.
Also, BVAEs can straightforwardly be trained using VI if training time is critical (and testing accuracy is non-critical), and we provide an empirical comparison to this approach.
\fi

\section*{Broader Impact}
A sophisticated out-of-distribution detection mechanism will be a critical component of any machine learning pipeline deployed in a safety-critical application domain, such as healthcare or autonomous driving.
Current algorithms are not able to identify scenarios in which they ought to fail, which is a major shortcoming, as that can lead to fatal decisions when deployed in decision-making pipelines.
This severely limits the applicability of current state-of-the-art methods to such application domains, which substantially hinders the generally wide potential benefits that machine learning could have on society.
We thus envision our approach to be beneficial and impactful in bringing deep neural networks and similar machine learning methods to such safety-critical applications.
That being said, failure of our system could indeed lead to sub-optimal and potentially fatal decisions.

\bibliographystyle{abbrv}
\bibliography{bibliography}

\clearpage
\appendix
\section{Stochastic Gradient Markov Chain Monte Carlo (SG-MCMC)}
\label{sec:sgmcmc}
To generate samples $\theta \sim p(\theta|\sD)$ of parameters $\theta$ of a DNN, one typically uses stochastic gradient MCMC methods such as stochastic gradient Hamiltonian Monte Carlo (SGHMC).
In particular, consider the posterior distribution $p(\theta | \sD) \propto \exp(-U(\theta, \sD))$ with potential energy function $U(\theta, \sD) = - \log p(\sD, \theta) = -\log(p(\sD | \theta) p(\theta)) = -\sum_{\x \in \sD} \log p(\x | \theta) - \log p(\theta)$ induced by the prior $p(\theta)$ and marginal log-likelihood $\log p(\x|\theta)$.
Hamiltonian Monte Carlo (HMC) \citep{duane1987,betancourt2017} is a method that generates samples $\theta \sim p(\theta|\sD)$ to efficiently explore the parameter space by simulating Hamiltonian dynamics, which involves evaluating the gradient $\nabla_\theta U(\theta)$ of $U$.
However, computing this gradient requires examining the entire dataset $\sD$ (due to the summation of the log-likelihood over all $\x \in \sD$), which might be prohibitively costly for large datasets.
To overcome this, \cite{chen2014} proposed SGHMC as a scalable HMC variant based on a noisy, unbiased gradient estimate $\nabla_\theta U(\theta, \sM) \simeq \nabla_\theta U(\theta, \sD)$ computed on a minibatch $\sM$ of points sampled uniformly at random from $\sD$ (i.e., akin to minibatch-based optimization algorithms such as stochastic gradient descent), i.e.,
\begin{flalign}
\label{eq:grad_U_M}
\hspace{-3.0mm}
  \nabla_\theta U(\theta, \sM) \hspace{-1mm}= \hspace{-1mm}- \textstyle\frac{|\sD|}{|\sM|} \textstyle\sum_{\x \in \sM} \hspace{-1mm} \nabla_\theta \log p(\x | \theta) \hspace{-0.5mm}- \hspace{-0.5mm}\nabla_\theta \log p(\theta) .\hspace{-2mm}
\end{flalign}
\section{Information-theoretic Perspective on the Proposed Disagreement Score}
\label{sec:info_perspective}
\subsection{An Information-theoretic Perspective}
\label{subsec:info_theory_perspective}
Expanding on \cref{subsec:ood_input}, we now provide a more principled justification for the disagreement score $D_\Theta[\x^*]$ in \cref{eq:D_ess}, which induces an \emph{information-theoretic perspective} on OoD detection and reveals an intriguing connection to \emph{active learning}.
Assume that given training data $\sD$ and a prior distribution $p(\theta)$ over the DGM parameters $\theta$, we have inferred a \emph{posterior distribution}
\begin{equation}
    p(\theta|\sD) = \frac{p(\sD|\theta) p(\theta)}{\int p(\sD|\theta) p(\theta) d\theta} = \frac{p(\sD|\theta)}{p(\sD)} p(\theta)  
\end{equation}
over $\theta$.
Then, for a given input $\x^*$, the score $D_\Theta[\x^*]$ quantifies \emph{how much the posterior $p(\theta|\sD)$ would change} if we were to add $\x^*$ to $\sD$ and then infer the \emph{augmented posterior}
\begin{equation}
    p(\theta|\sD^*) = \frac{p(\x^*| \theta) p(\theta|\sD)}{\int p(\x^*| \theta) p(\theta|\sD) d\theta} = \frac{p(\x^*|\theta)}{p(\x^*|\sD)} p(\theta|\sD)  
\end{equation}
based on this new training set $\sD^* = \sD \cup \{\x^*\}$.
To see this, first note that this change in the posterior is quantified by the \emph{normalized likelihood} $\frac{p(\x^*|\theta)}{p(\x^*|\sD)}$, such that
models $\theta$ under which $\x^*$ is more (less) likely -- relative to all other models -- will have a higher (lower) probability under the updated posterior $p(\theta|\sD^*)$.
Now, given the samples $\{ \theta_m \}_{m=1}^M$ of the old posterior $p(\theta|\sD)$, the normalized likelihood $\frac{p(\x^*|\theta)}{p(\x^*|\sD)}$ for a given model $\theta$ is proportional to $w_\theta$ in \cref{eq:D_ess}, i.e.,
\begin{flalign}
\label{eq:smc}
\hspace{-4mm}
    \textstyle\frac{p(\x^*|\theta)}{p(\x^*|\sD)} \overset{(\ref{eq:expected_lik})}{=} \textstyle\frac{p(\x^*|\theta)}{\E_{p(\theta|\sD)}[p(\x^*|\theta)]} \simeq \textstyle\frac{p(\x^*|\theta)}{\frac{1}{M} \sum_{\theta \in \Theta} p(\x^*|\theta)} \overset{(\ref{eq:D_ess})}{=} M w_\theta\ .
\hspace{-3mm}
\end{flalign}
Thus, $w_\theta$ intuitively measures the \emph{relative usefulness} of $\theta$ for describing the new posterior $p(\theta|\sD^*)$.
More formally, the $[w_\theta]_{\theta \in \Theta}$ correspond to the \emph{importance weights} of the samples $\theta \in \Theta$ drawn from the \emph{proposal} distribution $p(\theta|\sD)$ for an importance sampling-based Monte Carlo approximation of an expectation w.r.t.\ the \emph{target} distribution $p(\theta|\sD^*)$,
\begin{flalign}
\label{eq:E_new}
\hspace{-3mm}
    \E_{p(\theta|\sD^*)}[f(\theta)] \overset{(\ref{eq:smc})}{\simeq} \E_{p(\theta|\sD)}[M w_\theta f(\theta)] \simeq \textstyle\sum_{\theta \in \Theta} w_\theta f(\theta)
\hspace{-2mm}
\end{flalign}
for any function $f: \Theta \rightarrow \sR$.
The score $D_\Theta[\x^*]$ in \cref{eq:D_ess} is a widely used measure of the efficiency of the estimator in \cref{eq:E_new}, known as the \emph{effective sample size} (ESS) of $\{ \theta_m \}_{m=1}^M$ \cite{martino2017}.
It quantifies how many i.i.d.\ samples drawn from the target posterior $p(\theta|\sD^*)$ are equivalent to the $M$ samples $\theta \in \Theta$ drawn from the proposal posterior $p(\theta|\sD)$ and weighted according to $w_\theta$, and thus indeed measures the \emph{change in distribution} from $p(\theta|\sD)$ to $p(\theta|\sD^*)$.
Equivalently, $D_\Theta[\x^*]$ can be viewed as quantifying the \emph{informativeness} of $\x^*$ for updating the DGM parameters $\theta$ to the ones capturing the true density.%
\footnote{\label{footnote}This connection is described in further detail in \cref{subsec:justification}.}

The OoD detection mechanism described in \cref{subsec:ood_input} can thus be intuitively summarised as follows:
\emph{In-distribution} inputs $\x^*$ are similar to the data points already in $\sD$ and thus \emph{uninformative} about the model parameters $\theta$, inducing \emph{small} change in distribution from $p(\theta|\sD)$ to $p(\theta|\sD^*)$, resulting in a \emph{large} ESS $D_\Theta[\x^*]$.
Conversely, \emph{OoD} inputs $\x^*$ are very different from the previous observations in $\sD$ and thus \emph{informative} about the model parameters $\theta$, inducing \emph{large} change in the posterior, resulting in a \emph{small} ESS $D_\Theta[\x^*]$.

Finally, this information-theoretic perspective on OoD detection reveals a close relationship to \emph{information-theoretic active learning} \cite{mackay1992,houlsby2011}.
There, the same notion of \emph{informativeness} (or, equivalently, \emph{disagreement}) is used to quantify the \emph{novelty} of an input $\x^*$ to be added to the data $\sD$, aiming to maximally improve the estimate of the model parameters $\theta$ by maximally \emph{reducing the entropy / epistemic uncertainty} in the posterior $p(\theta|\sD)$.
This is further justified in the next section.

\subsection{Further Justification}
\label{subsec:justification}
In \cref{subsec:info_theory_perspective}, we mentioned that the disagreement score $D_\Theta[\x^*]$ defined in \cref{eq:D_ess} can be viewed as quantifying the \emph{informativeness} of the input $\x^*$ for updating the DGM parameters $\theta$ to the ones capturing the true density, yielding an information-theoretic perspective on OoD detection and revealing a close relationship to \emph{information-theoretic active learning} \cite{mackay1992}.
While this connection intuitively sensible, we now further describe and justify it.

In the paradigm of \emph{active learning}, the goal is to iteratively select inputs $\x^*$ which improve our estimate of the model parameters $\theta$ as rapidly as possible, in order to obtain a decent estimate of $\theta$ using as little data as possible, which is critical in scenarios where obtaining training data is expensive (e.g. in domains where humans or costly simulations have to be queried to obtain data, which includes many medical or scientific applications).
The main idea of \emph{information-theoretic} active learning is to maintain a posterior distribution $p(\theta|\sD)$ over the model parameters $\theta$ given the training data $\sD$ observed thus far, and to then select the new input $\x^*$ based on its \emph{informativeness} about the distribution $p(\theta|\sD)$, which is measured by the \emph{change in distribution} between the current $p(\theta|\sD)$ posterior and the updated posterior $p(\theta|\sD^*)$ with $\sD^* = \sD \cup \{\x^*\}$.
This change in the posterior distribution can, for example, be quantified by the cross-entropy or KL divergence between $p(\theta|\sD)$ and $p(\theta|\sD^*)$ \cite{mackay1992}, or by the decrease in entropy between $p(\theta|\sD)$ and $p(\theta|\sD^*)$ \cite{houlsby2011}.

Intriguingly, while the problems of active learning and out-of-distribution detection have clearly distinct goals, they are fundamentally related in that they both critically rely on a reliable way to quantify how \emph{different} an input $\x^*$ is from the training data $\sD$ (or, put differently, how \emph{novel} or \emph{informative} $\x^*$ is).
While in active learning, we aim to identify the input $\x^*$ that is maximally \emph{different} (or \emph{novel} / \emph{informative}) in order to best improve our estimate of the model parameters by adding $\x^*$ to the training dataset $\sD$, in out-of-distribution detection, we aim to classify a given input $\x^*$ as either in-distribution or OoD based on how \emph{different} (or \emph{novel} / \emph{informative}) it is.
This naturally suggests the possibility of leveraging methods to quantify the \emph{novelty} / \emph{informativeness} of an input $\x^*$ developed for one problem, and apply it to the other problem.
However, most measures used in active learning are designed for \emph{continuous} representations of the distributions $p(\theta|\sD)$ and $p(\theta|\sD^*)$, and are not directly applicable in our setting where $p(\theta|\sD)$ and $p(\theta|\sD)$ are represented by a \emph{discrete} set of samples $\Theta$.

That being said, $D_\Theta[\x^*]$ can indeed be viewed as quantifying the \emph{change in distribution} between the sample-based representations of $p(\theta|\sD)$ and $p(\theta|\sD^*)$ induced by $\x^*$ (and thus the \emph{informativeness} of $\x^*$), revealing a link to information-theoretic active learning.
In particular, \cite{martino2017} show that $D_\Theta[\x^*]$ (which corresponds to the \emph{effective sample size}, as described in \cref{subsec:info_theory_perspective}) is closely related to the \emph{Euclidean distance} between the vector of importance weights $\mathbf{w} = [w_\theta]_{\theta \in \Theta}$ and the vector $\mathbf{w}^* = [\frac{1}{M}]_{\theta \in \Theta}$ of probabilities defining the \emph{discrete uniform probability mass function}, i.e.,
\begin{equation}
    \|\mathbf{w} - \mathbf{w}^*\|_2 = \sqrt{\frac{1}{D_\Theta[\x^*]} - \frac{1}{M}} \quad \Longleftrightarrow \quad D_\Theta[\x^*] = \frac{1}{\|\mathbf{w} - \mathbf{w}^*\|_2^2 + \frac{1}{M}}
\end{equation}
such that \emph{maximizing} the score $D_\Theta[\x^*]$ is equivalent to \emph{minimizing} the Euclidian distance $\|\mathbf{w} - \mathbf{w}^*\|_2$.
Now, since
\begin{equation}
\label{eq:new_post}
    p(\theta|\sD^*) = \frac{p(\x^*| \theta) p(\theta|\sD)}{\int p(\x^*| \theta) p(\theta|\sD) d\theta} = \frac{p(\x^*|\theta)}{p(\x^*|\sD)} p(\theta|\sD) \overset{(\ref{eq:smc})}\simeq M w_\theta p(\theta|\sD)\ ,
\end{equation}
we observe that for a given model $\theta \in \Theta$, the posterior $p(\theta|\sD^*)$ is equal to $p(\theta|\sD)$ if and only if $M w_\theta = 1 \Longleftrightarrow w_\theta = \frac{1}{M}$, such that $p(\theta|\sD^*)$ is equal to $p(\theta|\sD)$ for \emph{all} models $\theta \in \Theta$ if and only if the weight vector $\mathbf{w} = [w_\theta]_{\theta \in \Theta}$ is equal to the vector $\mathbf{w}^* = [\frac{1}{M}]_{\theta \in \Theta}$ defining the discrete uniform probability mass function (pmf), in which case their Euclidian distance is minimized at $\|\mathbf{w} - \mathbf{w}^*\|_2 = 0$.
As a result, the new posterior $p(\theta|\sD^*)$ is identical to the previous posterior $p(\theta|\sD)$ over the models $\theta \in \Theta$ (i.e., the change in the posterior is \emph{minimized}) if and only if the score $D_\Theta[\x^*]$ is \emph{maximized} to be $D_\Theta[\x^*] = M$.
Conversely, the Euclidean distance is \emph{maximized} at $\|\mathbf{w} - \mathbf{w}^*\|_2 = \sqrt{(1 - \frac{1}{M})}$ if and only if the weight vector is $\mathbf{w} = [0, \ldots, 0, 1, 0, \ldots, 0]$, in which case the new posterior is $p(\theta|\sD^*) = 0$ for all $M-1$ models $\theta$ for which $w_\theta = 0$, and $p(\theta|\sD^*) = M p(\theta|\sD)$ for the single model $\theta$ for which $w_\theta = 1$.
Thus, the change between the new and previous posterior over the models $\theta \in \Theta$ is \emph{maximized} if and only if the score $D_\Theta[\x^*]$ is \emph{minimized} to be $D_\Theta[\x^*] = 1$.

Finally, we observe that the notion of \emph{change in distribution} for sample-based representations of posteriors captured by the Euclidian distance $\|\mathbf{w} - \mathbf{w}^*\|_2$ described above is \emph{closely related} to the notion of \emph{change in distribution} for continuous posterior representations.
To see this, consider the \emph{Kullback-Leibler (KL) divergence}, which is an information-theoretic measure for the discrepancy between distributions commonly used in information-theoretic active learning, defined as
\begin{equation}
\label{eq:kl}
    \KL[p(\theta|\sD) \| p(\theta|\sD^*)] = \int p(\theta|\sD) \log \frac{p(\theta|\sD)}{p(\theta|\sD^*)} d\theta \ .
\end{equation}
We now show that \emph{maximizing} our proposed OoD detection score $D_\Theta[\x^*]$ is equivalent to \emph{minimizing} the KL divergence $\KL[p(\theta|\sD) \| p(\theta|\sD^*)]$ between the previous posterior $p(\theta|\sD)$ and the new posterior $p(\theta|\sD^*)$, and vice versa, which is formalized in \cref{prop} below.
This provides further evidence for the close connection between our proposed OoD detection approach and information-theoretic principles, and suggests that information-theoretic measures such as the KL divergence can be also used for OoD detection, yielding the paradigm of \emph{information-theoretic out-of-distribution detection}.
\begin{proposition}
\label{prop}
    Assume that the weights $w_\theta$ have some minimal, arbitrarily small, positive value $\varepsilon > 0$, i.e., $w_\theta > \varepsilon, \forall \theta \in \Theta$.
    Also, assume that the KL divergence $\KL[p(\theta|\sD) \| p(\theta|\sD^*)]$ is approximated based on a set $\Theta = \{\theta_m\}_{m=1}^M$ of samples $\theta_m \sim p(\theta|\sD)$.
    Then, an input $\x^* \in \sX$ is a \emph{maximizer} of $D_\Theta[\x^*]$ if and only if it is a \emph{minimizer} of $\KL[p(\theta|\sD) \| p(\theta|\sD^*)]$.
    Furthermore, an input $\x^* \in \sX$ is a \emph{minimizer} of $D_\Theta[\x^*]$ if and only if it is a \emph{maximizer} of $\KL[p(\theta|\sD) \| p(\theta|\sD^*)]$.
    Formally,
    \begin{align}
    \label{eq:max_min}
        \operatorname{arg\,max}_{\x^* \in \sX} D_\Theta[\x^*] &= \operatorname{arg\,min}_{\x^* \in \sX} \KL[p(\theta|\sD) \| p(\theta|\sD^*)]\ ,\\
    \label{eq:min_max}
        \operatorname{arg\,min}_{\x^* \in \sX} D_\Theta[\x^*] &= \operatorname{arg\,max}_{\x^* \in \sX} \KL[p(\theta|\sD) \| p(\theta|\sD^*)]\ .
    \end{align}
\end{proposition}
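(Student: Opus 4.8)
The plan is to reduce everything to the vector of normalized weights $\mathbf{w} = [w_\theta]_{\theta \in \Theta}$, since both quantities in play depend on $\x^*$ only through $\mathbf{w}$. First I would rewrite the KL divergence in terms of the weights. Using the sample-based identity $p(\theta|\sD^*) \simeq M w_\theta\, p(\theta|\sD)$ from \cref{eq:new_post}, and approximating the expectation defining \cref{eq:kl} by the Monte Carlo average over the samples $\theta_m \sim p(\theta|\sD)$ (which carry uniform weight $1/M$), I obtain
\begin{equation}
\KL[p(\theta|\sD) \| p(\theta|\sD^*)] \simeq \frac{1}{M}\sum_{\theta \in \Theta} \log \frac{1}{M w_\theta} = -\log M - \frac{1}{M}\sum_{\theta \in \Theta} \log w_\theta .
\end{equation}
Thus the KL is an increasing affine function of the symmetric convex function $h_2(\mathbf{w}) = -\sum_\theta \log w_\theta$, whereas $D_\Theta[\x^*] = 1/\sum_\theta w_\theta^2$ is the reciprocal of the symmetric convex function $h_1(\mathbf{w}) = \sum_\theta w_\theta^2$. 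Both are to be optimized over the compact polytope $\Delta_\varepsilon = \{\mathbf{w} : \sum_\theta w_\theta = 1,\ w_\theta \geq \varepsilon\ \forall \theta\}$.

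Next I would locate the common extrema of $h_1$ and $h_2$ on $\Delta_\varepsilon$. For the interior, a Lagrange-multiplier computation on $\sum_\theta w_\theta = 1$ gives $2 w_\theta = \lambda$ for $h_1$ and $-1/w_\theta = \lambda$ for $h_2$; in both cases the unique stationary point is the uniform vector $\mathbf{w}^* = [1/M]_{\theta}$, feasible since $\varepsilon$ is arbitrarily small, which minimizes each convex function. Hence $\mathbf{w}^*$ simultaneously maximizes $D_\Theta = 1/h_1$ (to its maximal value $M$) and minimizes the KL (to its minimal value $0$), establishing \cref{eq:max_min}. For the opposite extreme, since a convex function on a polytope attains its maximum at a vertex, and the vertices of $\Delta_\varepsilon$ are exactly the points with $M-1$ coordinates equal to $\varepsilon$ and one equal to $1-(M-1)\varepsilon$ (seen by the substitution $u_\theta = w_\theta - \varepsilon$), the symmetry of $h_1$ and $h_2$ forces both to attain their maxima on the very same set of vertices. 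There $h_1$ is maximal (so $D_\Theta$ minimal) and $h_2$ is maximal (so the KL maximal), giving \cref{eq:min_max}. This is precisely where the hypothesis $w_\theta > \varepsilon$ is indispensable: without it the maximizer of the KL would push some weight to $0$, sending $-\log w_\theta \to +\infty$ so that the supremum of the KL would be unattained; the $\varepsilon$-floor compactifies the domain and pins the maximizer to a well-defined boundary vertex that coincides with the minimizer of $D_\Theta$.

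I expect the main obstacle to be the transition from the optimization over weight vectors to the optimization over inputs $\x^*$ asserted in \cref{eq:max_min,eq:min_max}. The scores depend on $\x^*$ only through the induced weights $\mathbf{w}(\x^*)$, but the two symmetric convex functions $h_1$ and $h_2$ need not be monotonically related \emph{pointwise}, so over an arbitrary reachable subset $\{\mathbf{w}(\x) : \x \in \sX\}$ their argmax sets could in principle diverge. The clean way around this is to argue (consistent with the idealized, sample-based treatment of \cref{subsec:info_theory_perspective}) that $\x^*$ ranges richly enough that $\mathbf{w}(\x^*)$ can realize the uniform vector and the extreme vertices of $\Delta_\varepsilon$; then the common weight-space optimizers identified above are attained by corresponding inputs $\x^*$, and the stated identities follow. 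I would state this reachability explicitly as the working assumption and emphasize that the convexity-plus-symmetry argument, together with the role of the $\varepsilon$-floor, carries the real mathematical content.
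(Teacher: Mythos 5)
Your proof is correct and reaches the same two extremal weight vectors as the paper, but the route for the second claim is genuinely different and cleaner. The paper, after deriving the same sample-based identity $\KL[p(\theta|\sD)\|p(\theta|\sD^*)] \simeq -\frac{1}{M}\sum_{\theta\in\Theta}\log M w_\theta$ (your formula agrees with \cref{eq:kl_smp} after expanding the logarithm), establishes that $\mathbf{w}=[\varepsilon,\ldots,\varepsilon,1-(M-1)\varepsilon,\varepsilon,\ldots,\varepsilon]$ maximizes the KL by an explicit perturbation computation: it adds $\delta>0$ to one of the $\varepsilon$-entries, subtracts it from the largest entry, and grinds through a chain of algebraic cancellations to show the KL strictly decreases whenever $\varepsilon<\frac{1-\delta}{M}$. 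Your argument replaces this entire calculation with the standard facts that a strictly convex function on a compact polytope attains its maximum only at vertices, that the vertices of $\{\mathbf{w}:\sum_\theta w_\theta=1,\ w_\theta\geq\varepsilon\}$ are exactly the permutations of that extreme vector, and that symmetry of $h_1(\mathbf{w})=\sum_\theta w_\theta^2$ and $h_2(\mathbf{w})=-\sum_\theta\log w_\theta$ forces their maximizer sets to coincide on all $M$ vertices; the uniform-vector case for \cref{eq:max_min} follows likewise from convexity plus symmetry (the paper instead just evaluates the KL at the uniform vector and notes it equals the lower bound $0$). Your approach buys generality (it would apply verbatim to any pair of symmetric, strictly convex surrogates) and avoids the paper's $\delta$-dependent edge case where the perturbation leaves the KL unchanged. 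Finally, your closing observation is a genuine point the paper does not address: both scores depend on $\x^*$ only through $\mathbf{w}(\x^*)$, the two functions are not monotonically related pointwise on the simplex, and so \cref{eq:max_min} and \cref{eq:min_max} as stated over $\x^*\in\sX$ implicitly require that the reachable set $\{\mathbf{w}(\x):\x\in\sX\}$ contain the uniform vector and the extreme vertices; making that reachability assumption explicit, as you propose, strengthens rather than weakens the result.
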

\begin{proof}
Reformulating the KL divergence in \cref{eq:kl} and approximating it via our set $\Theta$ of posterior samples, we obtain
\begin{align}
\nonumber
    \KL[p(\theta|\sD) \| p(\theta|\sD^*)] &\overset{(\ref{eq:kl})}{=} \int p(\theta|\sD) \log \frac{p(\theta|\sD)}{p(\theta|\sD^*)} d\theta\\
\nonumber
    &= -\int p(\theta|\sD) \log \frac{p(\theta|\sD^*)}{p(\theta|\sD)} d\theta\\
\nonumber
    &\overset{(\ref{eq:new_post})}{=} -\int p(\theta|\sD) \log \frac{p(\x^*|\theta)}{p(\x^*|\sD)} d\theta\\
\nonumber
    &\overset{(\ref{eq:expected_lik})}{=} -\int p(\theta|\sD) \log \frac{p(\x^*|\theta)}{\E_{p(\theta|\sD)}[p(\x^*|\theta)]} d\theta\\
\nonumber
    &= -\E_{p(\theta|\sD)}\left[ \log \frac{p(\x^*|\theta)}{\E_{p(\theta|\sD)}[p(\x^*|\theta)]} \right]\\
\nonumber
    &\simeq -\E_{p(\theta|\sD)}\left[ \log \frac{p(\x^*|\theta)}{\frac{1}{M} \sum_{\theta \in \Theta} p(\x^*|\theta)} \right]\\
\nonumber
    &\simeq -\frac{1}{M} \sum_{\theta \in \Theta} \log \frac{p(\x^*|\theta)}{\frac{1}{M} \sum_{\theta \in \Theta} p(\x^*|\theta)}\\
\label{eq:kl_smp}
    &\overset{(\ref{eq:D_ess})}{=} -\frac{1}{M} \sum_{\theta \in \Theta} \log M w_\theta\ .
\end{align}
To see that \cref{eq:max_min} holds, observe that the sample-based approximation of $\KL[p(\theta|\sD) \| p(\theta|\sD^*)]$ in \cref{eq:kl_smp} is indeed \emph{minimized} to be $\KL[p(\theta|\sD) \| p(\theta|\sD^*)] = 0$ if and only if the weight vector $\mathbf{w} = [w_\theta]_{\theta \in \Theta}$ is equal to the vector $\mathbf{w} = [\frac{1}{M}]_{\theta \in \Theta}$ defining the discrete uniform pmf (and thus if and only if the score $D_\Theta[\x^*]$ is \emph{maximized} to be $D_\Theta[\x^*] = M$), as then 
\begin{equation}
    \KL[p(\theta|\sD) \| p(\theta|\sD^*)] \overset{(\ref{eq:kl_smp})}{\simeq} -\frac{1}{M} \sum_{\theta \in \Theta} \log M w_\theta = -\frac{1}{M} M \log M \frac{1}{M} = -\log 1 = 0\ .
\end{equation}
We now show \cref{eq:min_max}.
To see why we need the assumption that the weights $w_\theta$ have some minimal, arbitrarily small, positive value $\varepsilon > 0$, i.e., $w_\theta \geq \varepsilon, \forall \theta \in \Theta$, consider the unconstrained case, where we know that $D_\Theta[\x^*]$ is \emph{minimized} to be $D_\Theta[\x^*] = 1$ if and only if the weight vector $\mathbf{w} = [w_\theta]_{\theta \in \Theta}$ is equal to the vector $\mathbf{w} = [0, \ldots, 0, 1, 0, \ldots, 0]$.
While this weight vector indeed \emph{maximizes} the sampling-based approximation of the KL divergence to be $\KL[p(\theta|\sD) \| p(\theta|\sD^*)] = \infty$,
\begin{equation}
    \KL[p(\theta|\sD) \| p(\theta|\sD^*)] \overset{(\ref{eq:kl_smp})}{\simeq} -\frac{1}{M} \sum_{\theta \in \Theta} \log M w_\theta = -\frac{1}{M} [\log M + (M-1) \log 0] = \infty\ ,
\end{equation}
this maximizer is not unique, as any other weight vector containing at least one weight of $w_\theta = 0$ equally achieves the maximum of $\KL[p(\theta|\sD) \| p(\theta|\sD^*)] = \infty$.
In theory, the KL divergence can thus not distinguish between weight vectors with different numbers of zero entries (i.e., different $\ell_0$-norms $\|\mathbf{w}\|_0$), although these clearly define different degrees of change in the discrete posterior representation.
However, in practice, it is very unlikely to occur that any $w_\theta = 0$.
To obtain a unique maximizer of the KL divergence, we thus assume $w_\theta \geq \varepsilon, \forall \theta \in \Theta$ (where $\varepsilon > 0$ can be chosen to be arbitrarily small), in which case $\KL[p(\theta|\sD) \| p(\theta|\sD^*)]$ is maximized if and only if $\mathbf{w} = [\varepsilon, \ldots, \varepsilon, 1 - (M-1) \varepsilon, \varepsilon, \ldots, \varepsilon]$, in which case
\begin{equation}
\label{eq:kl_eps}
    \KL[p(\theta|\sD) \| p(\theta|\sD^*)] \simeq -\frac{1}{M} \sum_{\theta \in \Theta} \log M w_\theta = -\frac{1}{M} [\log M (1 - (M-1) \varepsilon) + (M-1) \log M \varepsilon] < \infty\ .
\end{equation}
The positivity assumption thus also ensures that the KL divergence remains bounded.
To see why $\mathbf{w} = [\varepsilon, \ldots, \varepsilon, 1 - (M-1) \varepsilon, \varepsilon, \ldots, \varepsilon]$ maximizes the KL divergence, consider the alternative vector $\mathbf{w} = [\varepsilon, \ldots, \varepsilon, \varepsilon + \delta, \varepsilon, \ldots, \varepsilon, 1 - (M-1) \varepsilon - \delta, \varepsilon, \ldots, \varepsilon]$ where any of the entries with minimal value $\varepsilon$ is increased by some arbitrarily small, positive $\delta > 0$, such that
\begin{equation}
\label{eq:kl_delta}
    \KL[p(\theta|\sD) \| p(\theta|\sD^*)] \simeq -\frac{1}{M} \sum_{\theta \in \Theta} \log M w_\theta -\frac{1}{M} [\log M (1 - (M-1) \varepsilon - \delta) + (M-2) \log M \varepsilon + \log M (\varepsilon + \delta) ]\ .
\end{equation}
To see that adding such a $\delta$ decreases the value of the KL divergence, observe that \cref{eq:kl_eps} and \cref{eq:kl_delta} yield
\begin{align*}
    \cancel{-\frac{1}{M}} [\log M (1 - (M-1) \varepsilon) + (M-1) \log M \varepsilon] &> \cancel{-\frac{1}{M}} [\log M (1 - (M-1) \varepsilon - \delta) + (M-2) \log M \varepsilon + \log M (\varepsilon + \delta) ]\\
    \log M (1 - (M-1) \varepsilon) + \cancel{(M-1)} \log M \varepsilon &< \log M (1 - (M-1) \varepsilon - \delta) + \cancel{(M-2) \log M \varepsilon} + \log M (\varepsilon + \delta)\\
    \log \cancel{M} (1 - (M-1) \varepsilon) + \log \cancel{M} \varepsilon &< \log \cancel{M} (1 - (M-1) \varepsilon - \delta) + \log \cancel{M} (\varepsilon + \delta)\\
    \log (1 - (M-1) \varepsilon) + \log \varepsilon &< \log (1 - (M-1) \varepsilon - \delta) + \log (\varepsilon + \delta) \\
    \cancel{\log} (1 - (M-1) \varepsilon) \varepsilon &< \cancel{\log} (1 - (M-1) \varepsilon - \delta) (\varepsilon + \delta) \\
    (1 - (M-1) \varepsilon) \varepsilon &< (1 - (M-1) \varepsilon - \delta) (\varepsilon + \delta) \\
    \varepsilon - (M-1) \varepsilon^2 &< (\varepsilon + \delta) - (M-1) \varepsilon (\varepsilon + \delta) - \delta (\varepsilon + \delta)\\
    \varepsilon - M \varepsilon^2 + \varepsilon^2 &< \varepsilon + \delta - (M-1) (\varepsilon^2 + \varepsilon \delta) - \varepsilon \delta - \delta^2\\
    \varepsilon - M \varepsilon^2 + \varepsilon^2 &< \varepsilon + \delta - [M \varepsilon^2 + M \varepsilon \delta - \varepsilon^2 - \varepsilon \delta] - \varepsilon \delta - \delta^2\\
    \cancel{\varepsilon} \cancel{- M \varepsilon^2} + \cancel{\varepsilon^2} &< \cancel{\varepsilon} + \delta \cancel{- M \varepsilon^2} - M \varepsilon \delta + \cancel{\varepsilon^2} + \cancel{\varepsilon \delta - \varepsilon \delta} - \delta^2\\
    0 &< \delta - M \varepsilon \delta - \delta^2\\
    0 &< 1 - M \varepsilon - \delta\\
    M \varepsilon &< 1 - \delta\\
    \varepsilon &< \frac{1 - \delta}{M}\ .
\end{align*}
The condition $\varepsilon < \frac{1 - \delta}{M}$ implies that the largest weight in $\mathbf{w}$, denoted by $w_{\theta_m}$, satisfies
\begin{align*}
    w_{\theta_m} &= 1 - (M-1) \varepsilon - \delta\\
    &> 1 - (M-1) \frac{1 - \delta}{M} - \delta\\
    &= 1 - \delta - \frac{M - M\delta - 1 + \delta}{M}\\
    &= \cancel{1 - \delta - 1 + \delta} + \frac{1 - \delta}{M}\\
    &= \frac{1 - \delta}{M}
\end{align*}
Thus, adding an arbitrarily small $\delta$ to one of the entries of $\mathbf{w}$ indeed decreases the value of the KL divergence, except when $\varepsilon = \frac{1 - \delta}{M}$, in which case the KL divergences remains the same.
However, in that case, the previously largest weight becomes $w_{\theta_m} = \frac{1 - \delta}{M} = \varepsilon$, yielding the weight vector $\mathbf{w} = [\frac{1 - \delta}{M}, \ldots, \frac{1 - \delta}{M}, \frac{1 - \delta}{M} + \delta, \frac{1 - \delta}{M}, \ldots, \frac{1 - \delta}{M}]$, which is close to the discrete uniform pmf and thus results in a KL divergence close to zero (which is thus not relevant for characterizing the \emph{maximizer} of the KL divergence).
We can analogously identify $\mathbf{w} = [\varepsilon, \ldots, \varepsilon, 1 - (M-1) \varepsilon, \varepsilon, \ldots, \varepsilon]$ to be a \emph{minimizer} of $D_\Theta[\x^*]$.

To conclude, since we can choose $\varepsilon$ to be arbitrarily small, it indeed holds that the KL divergence $\KL[p(\theta|\sD) \| p(\theta|\sD^*)]$ is \emph{minimized} if and only if the score $D_\Theta[\x^*]$ is \emph{maximized}, which is when the importance weight vector defines the discrete uniform pmf, i.e., $\mathbf{w} = [\frac{1}{M}]_{\theta \in \Theta}$. 
Moreover, the KL divergence $\KL[p(\theta|\sD) \| p(\theta|\sD^*)]$ is \emph{maximized} if and only if the score $D_\Theta[\x^*]$ is \emph{minimized}, which is when the importance weight vector is equal to $\mathbf{w} = [\varepsilon, \ldots, \varepsilon, 1 - (M-1) \varepsilon, \varepsilon, \ldots, \varepsilon]$.
\end{proof}
\section{Pseudocode of BVAE Training Procedure}
\label{sec:pseudocode_bvae_alt}
Pseudocode for training a \emph{Bayesian} VAE (for both variants 1 and 2, as described in \cref{sec:bvae}) is shown in \cref{alg:bvae2}, which is contrasted to the pseudocode for training a \emph{regular} VAE in \cref{alg:vae} , allowing for a direct comparison between the closely related training procedures.
In particular, in \cref{alg:bvae2}, the parts in {\color{violet} purple} correspond to parts that are \emph{different} from VAE training in \cref{alg:vae} and that apply to \emph{both} BVAE variants 1 and 2.
Furthermore, the parts in \cref{alg:bvae2} in {\color{blue} blue} correspond to BVAE \emph{variant 1 only}, while the parts in {\color{red} red} correspond to BVAE \emph{variant 2 only}.

I.e., the training procedure of BVAE \emph{variant 1} is described by the union of all black, {\color{violet} purple} and {\color{blue} blue} parts in \cref{alg:bvae2}, where the only difference to the regular VAE training procedure in \cref{alg:vae} is that an SG-MCMC sampler is used instead of an SGD optimizer for the decoder parameters $\theta$.
The training procedure of BVAE \emph{variant 2} is described by the union of all black, {\color{violet} purple} and {\color{red} red} parts in \cref{alg:bvae2}, where, in contrast to the regular VAE training procedure in \cref{alg:vae}, an SG-MCMC sampler is used instead of an SGD optimizer for \emph{both} the decoder parameters $\theta$ and the encoder parameters $\phi$.

For the BVAE training procedure in \cref{alg:bvae2}, we thus have to additionally specify the \emph{burn-in length} $B$, which denotes the number of samples to discard at the beginning before storing any samples, as well as the \emph{sample distance} $D$, which denotes the number of samples to discard in-between two subsequently stored samples (i.e., controlling the degree of correlation between the stored samples).
This results in a total of $M = (T-B) / D + 1$ samples for each sampling chain.

Regular VAE training in \cref{alg:vae} thus produces \emph{point estimates} $\theta_T$ for the decoder parameters and $\phi_T$ for the encoder parameters, while Bayesian VAE training produces a set $\Theta = \{\theta_B, \theta_{B+D}, \theta_{B+2D}, \ldots, \theta_T\}$ of \emph{posterior samples} $\phi_t \sim p(\phi|\sD)$ of decoder parameters, as well as either a \emph{point estimate} $\phi_T$ for the encoder parameters (in case of variant 1), or a set $\Phi = \{\phi_B, \phi_{B+D}, \phi_{B+2D}, \ldots, \phi_T\}$ of \emph{posterior samples} $\theta_t \sim p(\theta|\sD)$ of encoder parameters (in case of variant 2).

Note that just like the regular VAE training procedure in \cref{alg:vae}, the Bayesian VAE training procedure in \cref{alg:bvae2} can in practice be conveniently implemented by exploiting automatic differentiation tools commonly employed by modern deep learning frameworks.
Finally, as SG-MCMC methods are not much more expensive to run than stochastic optimization methods (i.e., both requiring a stochastic gradient step in every iteration, but SG-MCMC potentially requiring more iterations $T$ to generate $M$ diverse samples), training a Bayesian VAE is not significantly more expensive than training a regular VAE.

\begin{minipage}{0.49\textwidth}
\begin{algorithm}[H]
\caption{\emph{Regular} VAE Training}
\label{alg:vae}
\begin{algorithmic}
    \STATE {\bfseries In.} Dataset $\sD$, mini-batch size $|\sM|$, number of epochs $T$, generative model $p(\x, \z, \theta)$, inference model $q(\z|\x, \phi)$
    \STATE \phantom{$|$}
    \STATE Initialize $\phi_0$ and $\theta_0$
	\FOR{$t = 1, \ldots, T$}
	    \STATE Set $\hat{\phi}_0 = \phi_{t-1}$, $\hat{\theta}_0 = \theta_{t-1}$
    	\FOR{$b = 1, \ldots, \frac{|\sD|}{|\sM|}$}
            \STATE Sample minibatch $\sM \sim \sD$
            \STATE Update $\hat{\phi}_{b-1} \rightarrow \hat{\phi}_b$ via SGD \phantom{$\{$}
            \STATE \phantom{$|$}
            \STATE Update $\hat{\theta}_{b-1} \rightarrow \hat{\theta}_b$ via SGD
        \ENDFOR
        \STATE Set $\phi_t = \hat{\phi}_{\frac{|\sD|}{|\sM|}}$, $\theta_t = \hat{\theta}_{\frac{|\sD|}{|\sM|}}$
        \STATE \phantom{$($}
        \STATE \phantom{$\{$}
        \STATE 
    \ENDFOR
	\STATE {\bfseries Out.} Decoder $\theta_T$ and encoder $\phi_T$ \phantom{$\{$}
    \STATE \phantom{$|$}
\end{algorithmic} 
\end{algorithm}
\end{minipage}%
\hfill
\begin{minipage}{0.49\textwidth}
\begin{algorithm}[H]
\caption{{\color{violet} \emph{BVAE}} Training ({\color{blue} Variant 1} \& {\color{red} 2})}
\label{alg:bvae2}
\begin{algorithmic}
    \STATE {\bfseries In.} Dataset $\sD$, mini-batch size $|\sM|$, number of epochs~$T$, generative model $p(\x, \z, \theta)$, inference model $q(\z|\x, \phi)$, {\color{violet} burn-in length $B$, sample distance $D$}
    \STATE Initialize $\phi_0$ and $\theta_0$, {\color{violet} and $\Theta = \emptyset$} {\color{red} and $\Phi = \emptyset$}
	\FOR{$t = 1, \ldots, T$}
	    \STATE Set $\hat{\phi}_0 = \phi_{t-1}$, $\hat{\theta}_0 = \theta_{t-1}$
    	\FOR{$b = 1, \ldots, \frac{|\sD|}{|\sM|}$}
            \STATE Sample minibatch $\sM \sim \sD$
            \STATE Update $\hat{\phi}_{b-1} \rightarrow \hat{\phi}_b$ via $\{$ {\color{blue} SGD} $|$ {\color{red} SG-MCMC} $\}$ 
            \STATE Update $\hat{\theta}_{b-1} \rightarrow \hat{\theta}_b$ via {\color{violet} SG-MCMC}
        \ENDFOR
        \STATE Set $\phi_t = \hat{\phi}_{\frac{|\sD|}{|\sM|}}$, $\theta_t = \hat{\theta}_{\frac{|\sD|}{|\sM|}}$
        {\color{violet} 
        \IF{$t \geq B$ and $(t-B) \text{ mod } D = 0$}
            \STATE Add $\Theta = \Theta \cup \{\theta_t\}$ {\color{red} and $\Phi = \Phi \cup \{\phi_t\}$}
        \ENDIF}
    \ENDFOR
	\STATE {\bfseries Out.} Decoder {\color{violet} samples $\Theta$} and encoder $\{$ {\color{blue} $\phi_T$} $|$ {\color{red} samples $\Phi$} $\}$ 
\end{algorithmic} 
\end{algorithm}
\end{minipage}%
\iffalse
We have
\begin{equation}
    \tilde{U}(\hat{\phi}_{b-1}, \hat{\theta}_{b-1}, \sM) = - \frac{|\sD|}{|\sM|} \sum_{\x \in \sM} \L_{\hat{\theta}_{b-1}, \hat{\phi}_{b-1}}(\x) - \log p(\hat{\phi}_{b-1}) - \log p(\hat{\theta}_{b-1})
\end{equation}
with the standard VAE ELBO as in \cref{eq:elbo_vae}, i.e.,
\begin{equation}
    \L_{\hat{\theta}_{b-1}, \hat{\phi}_{b-1}}(\x) = \E_{q(\z|\x,\hat{\phi}_{b-1})}[\log p(\x | \z, \hat{\theta}_{b-1})] - \KL( {q(\z|\x, \hat{\phi}_{b-1})} \Vert p(\z) ) \ .
\end{equation}
\fi
\section{Additional Plots for Experiments}
\label{sec:additional_exp}
We show additional plots for the experiments conducted in \cref{sec:experiments}.
\subsection{Score Histograms for SVHN vs.\ CIFAR10 Task}
To complement the histograms of scores in \cref{fig:curves} for the FashionMNIST vs.\ MNIST task, \cref{fig:hist-svhn} shows similar behaviour for the corresponding histograms for the SVHN vs.\ CIFAR10 task.
\begin{figure*}[ht!]
	\centering
	\raisebox{-0.46\height}{\includegraphics[trim={8mm 0 0 0},width=0.49\textwidth]{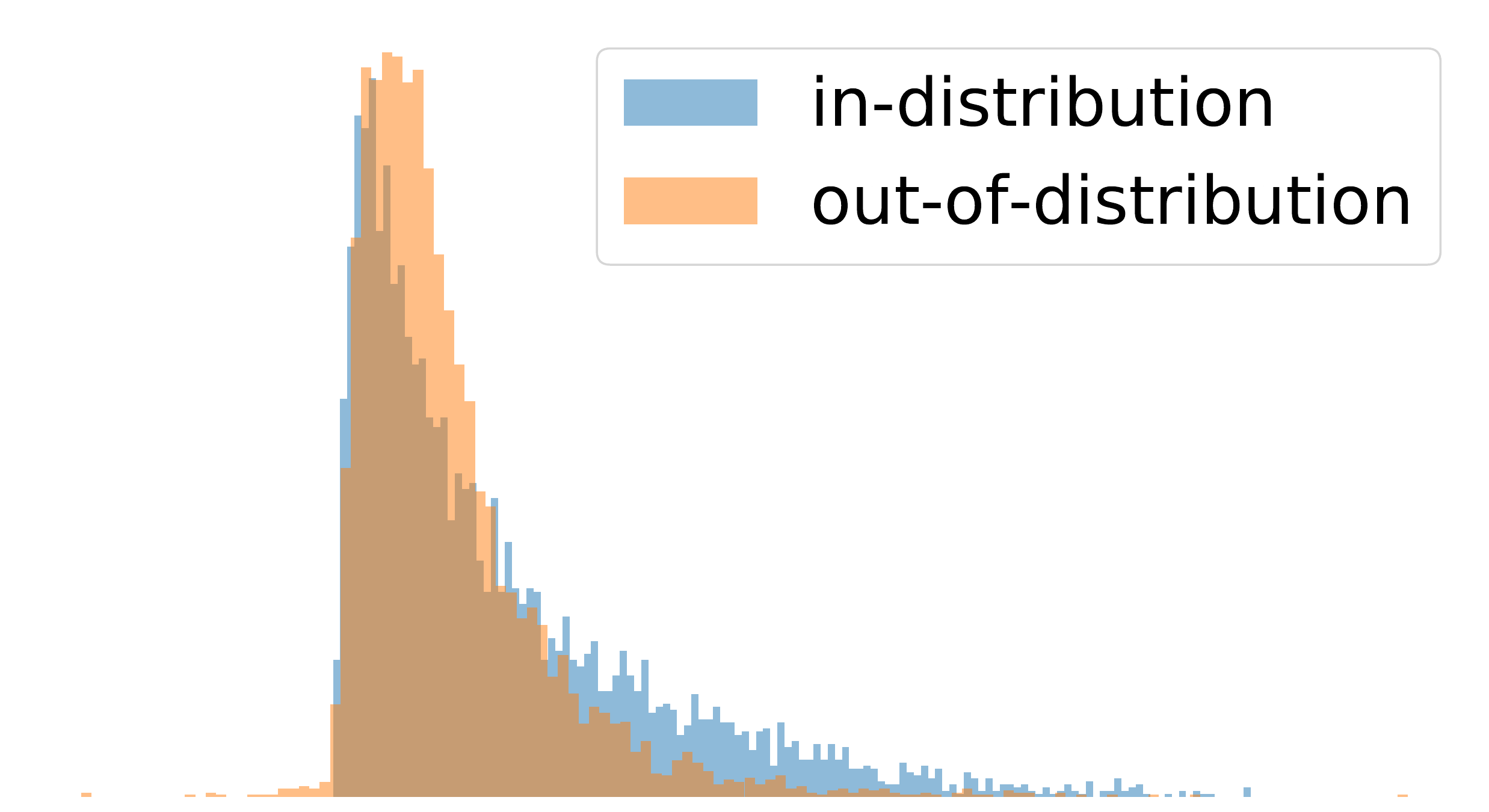}}
	\raisebox{-0.46\height}{\includegraphics[trim={12mm 0 14mm 0},clip,width=0.45\textwidth]{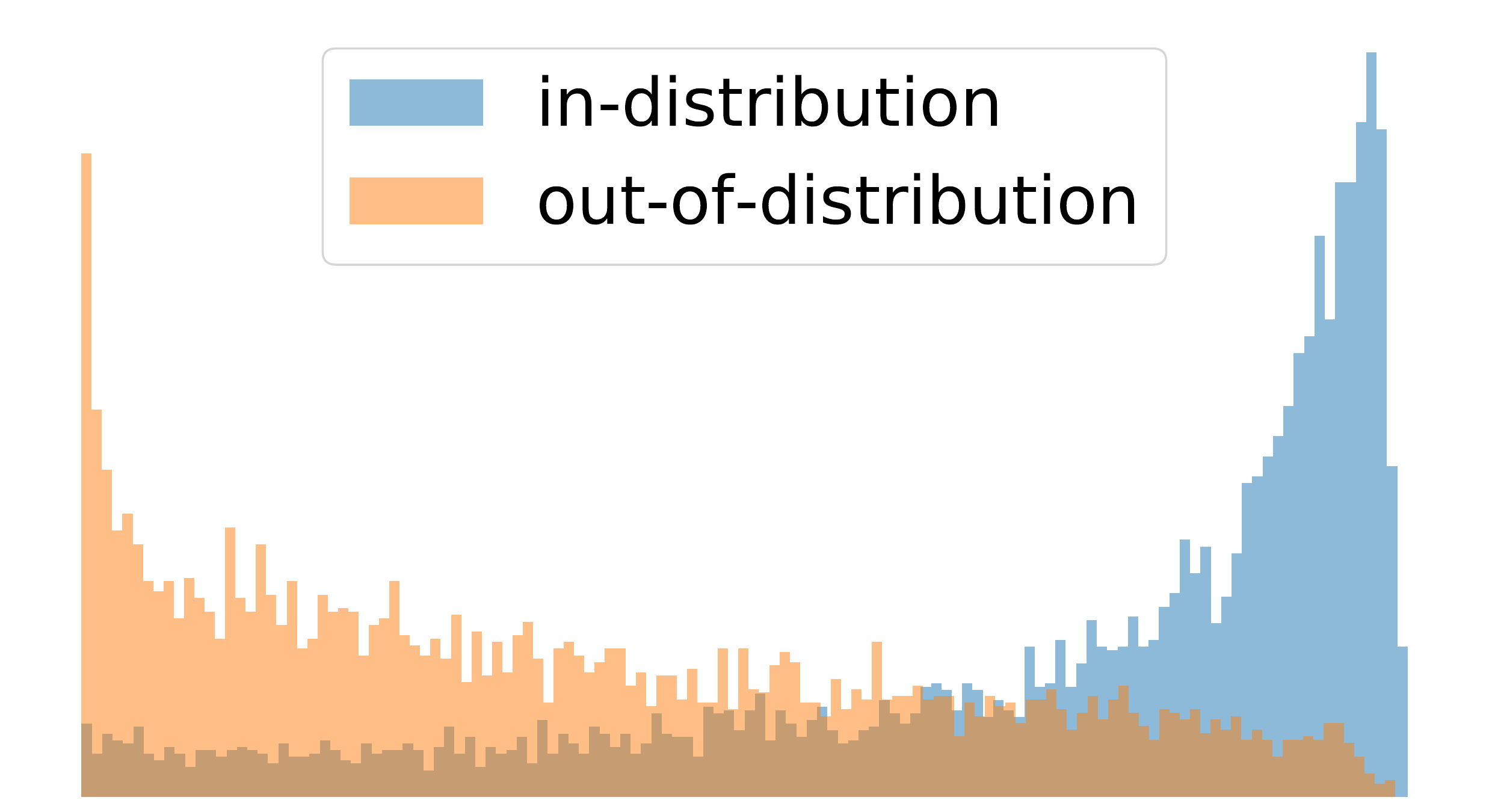}}
    \caption{Histograms of \texttt{LL} (left) and  \texttt{BVAE$_2$} (right) scores on the SVHN vs.\ CIFAR10 task, again showing that \texttt{BVAE$_2$} separates in-distribution and OoD data much more clearly than \texttt{LL}.}
    \label{fig:hist-svhn}
\end{figure*}

\subsection{Further Precision-Recall and ROC Curves}
We report precision-recall curves for all benchmarks, as well as ROC curves for the FashionMNIST (held-out classes) benchmark.
We show both types of precision-recall curves, depending on whether in-distribution data are considered as the false class (denoted by "in"), or whether OoD data are considered to be the false class (denoted by "out").
\cref{fig:fashion_examples} also shows examples from the FashionMNIST dataset, in order to help visualize the different class splits for the FashionMNIST (held-out class) benchmark.
\begin{figure*}[htb!]
	\centering
    \includegraphics[width=\textwidth]{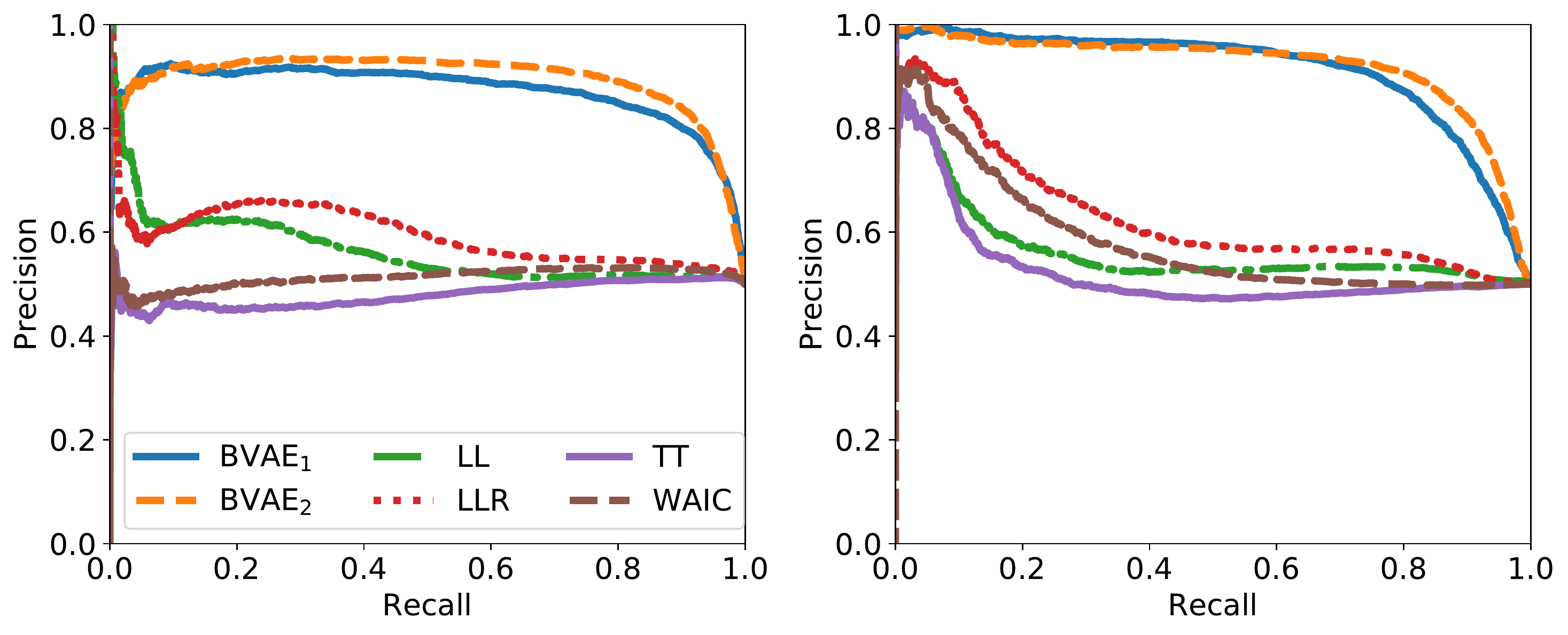}
    \caption{Precision-recall curves (in) and (out) on the FashionMNIST vs.\ MNIST benchmark.}
    \label{fig:fashion_examples}
\end{figure*}
\begin{figure*}[htb!]
	\centering
    \includegraphics[width=\textwidth]{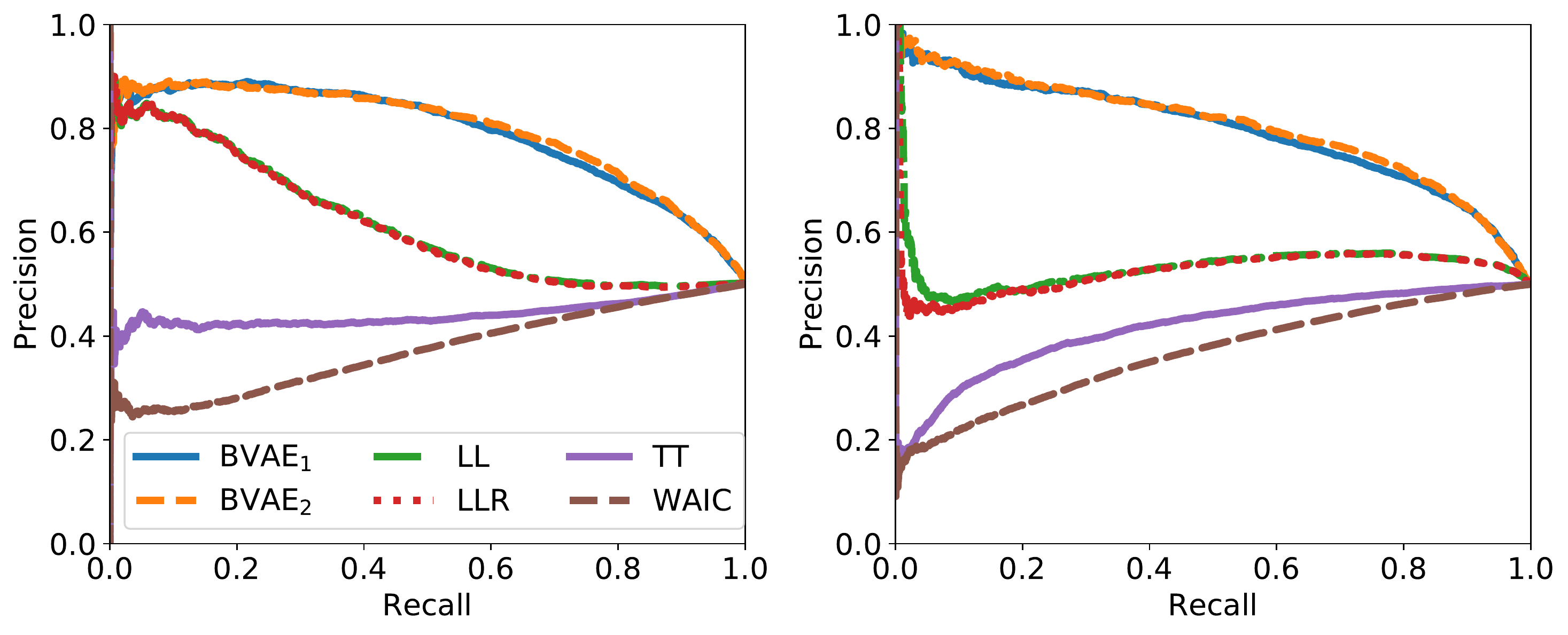}
    \caption{Precision-recall curves (in) and (out) on the SVHN vs.\ CIFAR10 benchmark.}
    \label{fig:prc}
\end{figure*}
\begin{figure*}[ht!]
    \centering
    \includegraphics{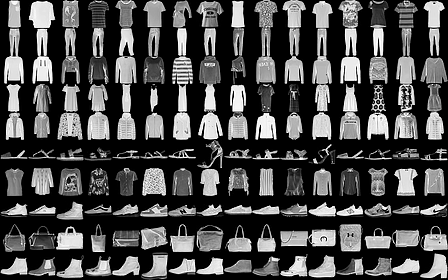}
    \caption{Examples from the FashionMNIST dataset for classes (from top to bottom) zero (t-shirt/top), one (trouser), two (pullover), three (dress), four (coat), five (sandal), six (shirt), seven (sneaker), eight (bag), and nine (ankle boot).}
\end{figure*}
\begin{figure*}[ht!]
    \includegraphics[trim={2.5mm 0 2.5mm 0},clip,width=\textwidth]{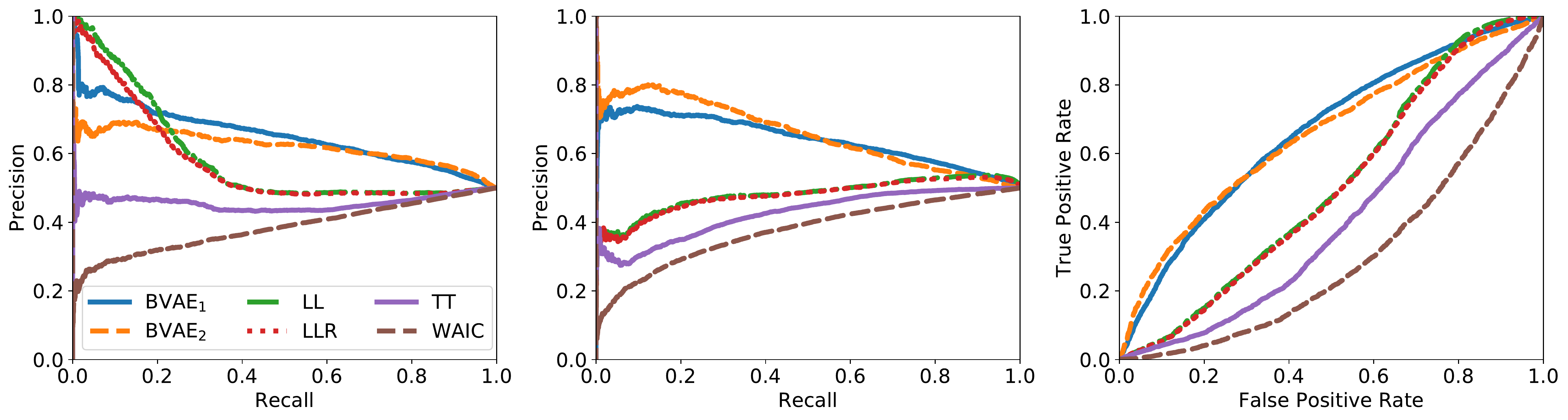}
    \caption{(Left) Precision-recall curves and (right) ROC curves of all methods on the FashionMNIST (held-out classes) benchmark with classes zero (t-shirt/top) and one (trouser) held-out.}
\end{figure*}
\begin{figure*}[ht!]
    \includegraphics[trim={2.5mm 0 2.5mm 0},clip,width=\textwidth]{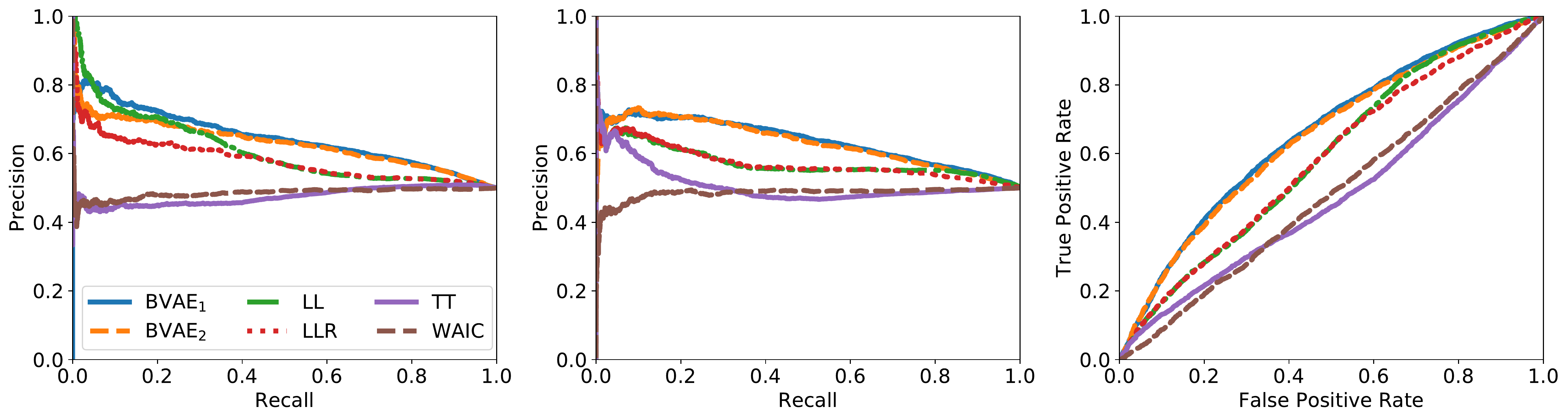}
    \caption{(From left to right) Precision-recall curves (in and out) and ROC curves of all methods on the FashionMNIST (held-out classes) benchmark with classes two (pullover) and three (dress) held-out.}
\end{figure*}
\begin{figure*}[ht!]
    \includegraphics[trim={2.5mm 0 2.5mm 0},clip,width=\textwidth]{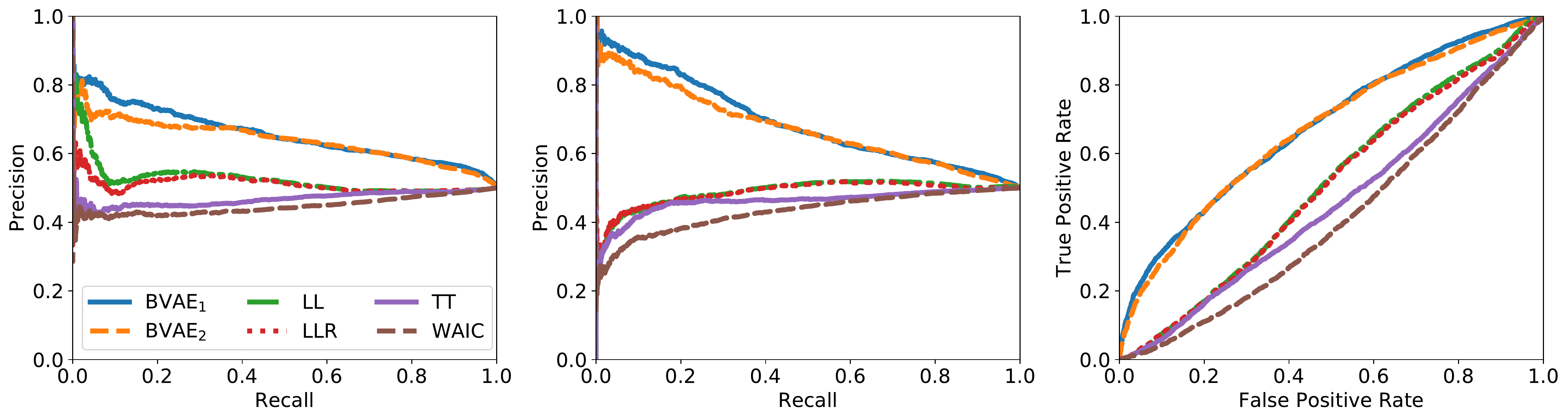}
    \caption{(From left to right) Precision-recall curves and ROC curves of all methods on the FashionMNIST (held-out classes) benchmark with classes four (coat) and five (sandal) held-out.}
\end{figure*}
\begin{figure*}[ht!]
    \includegraphics[trim={2.5mm 0 2.5mm 0},clip,width=\textwidth]{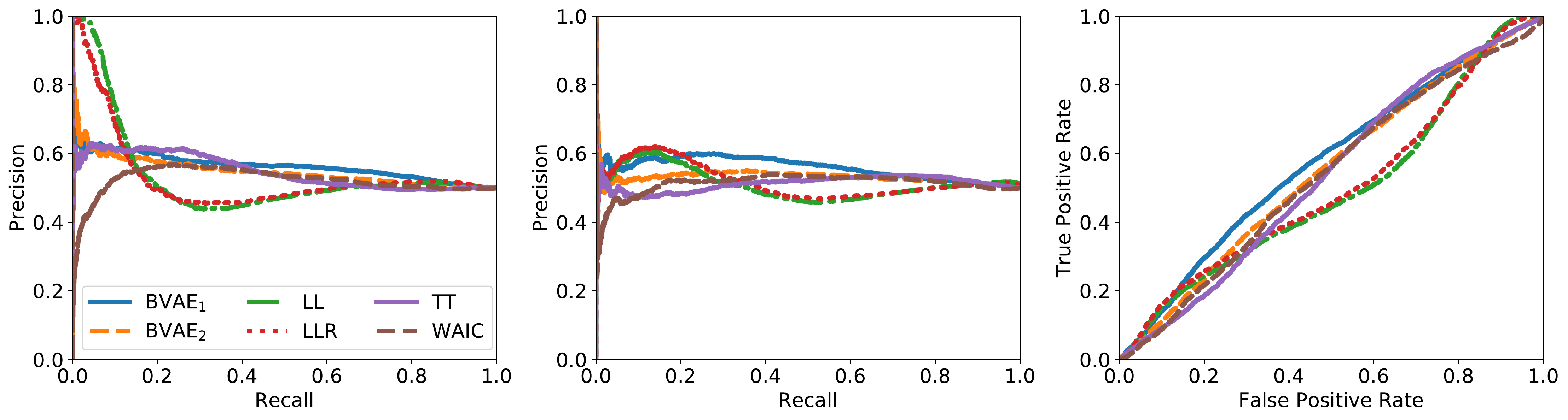}
    \caption{(From left to right) Precision-recall curves (in and out) and ROC curves of all methods on the FashionMNIST (held-out classes) benchmark with classes six (shirt) and seven (sneaker) held-out.}
\end{figure*}
\begin{figure*}[ht!]
    \includegraphics[trim={2.5mm 0 2.5mm 0},clip,width=\textwidth]{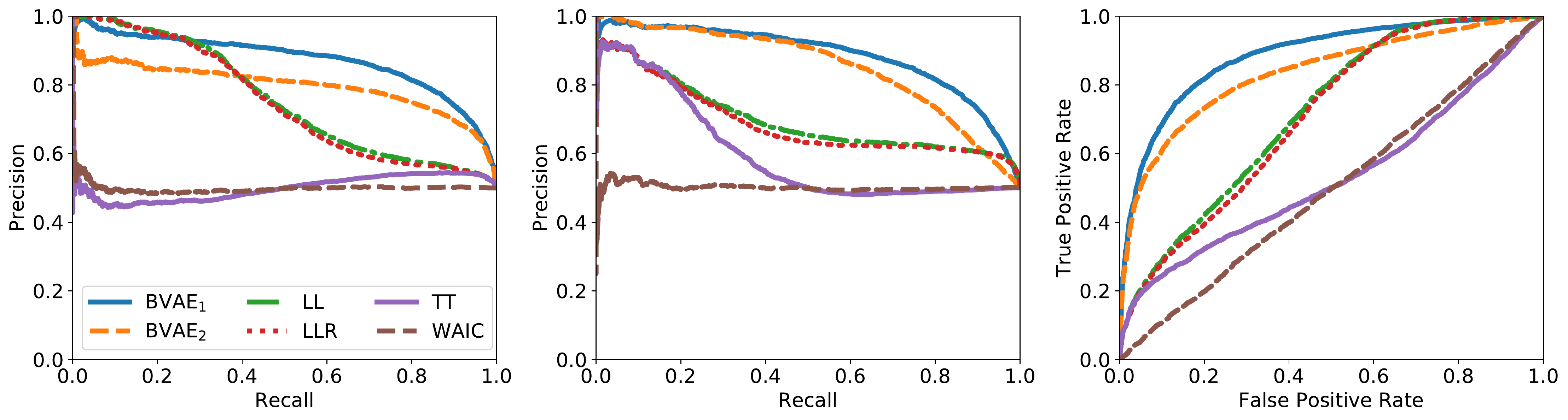}
    \caption{(From left to right) Precision-recall curves (in and out) and ROC curves of all methods on the FashionMNIST (held-out classes) benchmark with classes eight (bag) and nine (ankle boot) held-out.}
\end{figure*}

\end{document}